\documentclass[sigconf]{acmart}
\usepackage{color}
\usepackage{float}
\usepackage{framed}
\usepackage{amsmath}
\usepackage{amsthm} 

\usepackage{amssymb}
\usepackage{bbm}
\usepackage{graphicx}
\usepackage[ruled,vlined]{algorithm2e}
\usepackage{algorithmic}
\usepackage{makecell} 
\usepackage{booktabs}
\usepackage{comment}
\usepackage{threeparttable}
\usepackage{tabularx} 
\usepackage{pifont}
\newcommand{\tick}{\ding{51}} 
\newcommand{\cross}{\ding{55}} 

\usepackage{graphicx,subcaption}
\usepackage[skip=0.333\baselineskip]{subcaption}
\allowdisplaybreaks

\newtheorem{theorem}{Theorem}
\newtheorem{lemma}[theorem]{Lemma}

\newtheorem{proposition}[theorem]{Proposition}

\theoremstyle{remark}
\newtheorem{remark}{Remark}
\ifodd 1

\else

    \newcommand{\com}[1]{}
\fi

\newcommand\Qiulin[1]{{#1}}
\newcommand\com[1]{{\color{red}#1}}

\def \ISTR {}

\copyrightyear{2026}\acmYear{2026}
\acmConference[MobiHoc '26]{The Twenty-seventh International Symposium on Theory, Algorithmic Foundations, and Protocol Design for Mobile Networks and Mobile Computing}{November 23-26, 2026}{Tokyo, Japan}
\acmBooktitle{The Twenty-seventh International Symposium on Theory, Algorithmic Foundations, and Protocol Design for Mobile Networks and Mobile Computing (MobiHoc '26), November 23-26, 2026, Tokyo, Japan}
\acmDOI{10.1145/3842721.3849669}
\acmISBN{979-8-4007-2975-1/2026/11}
\acmPrice{}

\title{Learning When to Update: A Near-Optimal Timing Bandit Approach}

\author{Qiulin Lin, Junyan Su, Liyuan Wang}
\affiliation{City University of Hong Kong,\country{Hong Kong, China}}

\author{Minghua Chen}
\affiliation{The Chinese University of Hong Kong (Shenzhen)\country{China} \\ City University of Hong Kong\country{Hong Kong, China}}

\begin{document}

\begin{abstract}


Systems operating in dynamic environments require timely updates to sustain performance. For resource-intensive systems such as machine learning models and digital twins, strategically timing updates is essential. Updating too frequently wastes resources, while updating too infrequently leads to costly performance degradation. The problem is particularly challenging when the system's degradation pattern is unknown a priori, as is common in new operating environments. 
We formalize this challenge as a novel \emph{timing bandit} problem, where each arm represents a candidate update interval with a fixed update cost and an unknown, stochastic degradation cost. Three structural properties distinguish this setting from standard multi-armed bandits: selecting an interval commits the learner to multiple time slots before the next update; arm costs are composed of per-step degradation costs and a fixed update cost; and selecting a longer interval naturally reveals degradation at every intermediate step, providing consecutive feedback relevant to shorter intervals. By exploiting these structures, we develop Balanced Consecutive Arm Elimination (\textsf{BCAE}). \textsf{BCAE} achieves $\tilde{O}(\sqrt{T})$ regret, improving upon the $\tilde{\Omega}(K\sqrt{T})$ regret of standard bandit algorithms in this setting, where $K$ is the number of candidate update intervals. We further propose an Optimism-Enhanced variant (\textsf{OE-BCAE}) that integrates lower-confidence-bound principles to improve empirical adaptivity while preserving the same regret order. Moreover, the regret bound achieved by our algorithms matches the theoretical lower bound up to logarithmic factors. {Simulation results demonstrate that our algorithms achieve low regret and remain stable as both the number of arms and the update cost vary.}

\end{abstract}


\maketitle

\section{Introduction}
\label{sec:intro}

Timely updates are crucial for systems operating in dynamic environments to sustain performance. For instance, machine learning models need to be retrained or fine-tuned to counteract model drift~\cite{lu2018learning}, a phenomenon in which the statistical properties of input data change over time since original training, leading to a gradual degradation in predictive performance~\cite{vela2022temporal,Lee2023Recommender}. Digital twins of physical assets need frequent synchronization with the real-world state they mirror. Otherwise, the fidelity of simulation-based decisions
degrades~\cite{grieves2016digital,tao2018digital}. 

However, performing an update is seldom free. Retraining a large-scale machine learning model may consume substantial computational resources
and energy. Synchronizing a digital twin can require expensive sensor readings or communication bandwidth. More specifically, in networked and edge deployments, these costs are often dominated by communication overhead. Triggering an update may require waking up a radio transceiver or transmitting a large volume of data over a
bandwidth-limited link, each of which drains the limited battery budget of an IoT device or edge node~\cite{wang2019adaptive, Bacinoglu2019finiteJournal}. Consequently, a system designer faces a fundamental trade-off: updating too frequently wastes resources on unnecessary refreshes, while updating too infrequently allows performance to degrade, potentially incurring costs that far exceed the savings from skipping an update.

Striking the right balance is further complicated by the fact that degradation patterns are often \emph{unknown a priori}. The rate at which a model's accuracy declines, or a digital twin's fidelity erodes, depends on the volatility of the underlying environment---a quantity that is itself uncertain and may change across deployments. When the degradation cost associated with each possible update interval is known or can be modeled parametrically, the optimal update schedule can be computed offline. In practice, however, such prior information is rarely available, especially when a system is deployed in a new environment. This renders the problem inherently an online learning problem: the system must discover an effective update policy through its own operational experience. The Age of Information (AoI) literature has extensively studied update scheduling under various application scenarios~\cite{sun2017update,yates2021age}, but it predominantly assumes known degradation functions (or age penalty functions) and optimizes a
closed-form objective. In contrast, the setting in which the degradation dynamics are stochastic and must be \emph{learned} online remains largely open; existing studies either assume full knowledge of the degradation model~\cite{Bacinoglu2019finiteJournal, arafa2019age, sun2017update} or adopt fully adversarial formulations~\cite{tripathi2019whittle,tripathi2021onlinearXiv,Lin2025AoI} that do not exploit the stochastic structure; see a detailed discussion of related work in Section~\ref{sec:related-work}. 

In this paper, we investigate the update scheduling problem (\textsf{USP}), which involves strategically determining the timing of updates under realistic conditions where future performance degradation patterns are stochastic and unknown. We aim to minimize the average total expected degradation cost and update expenses over a time horizon. The fundamental challenge lies in learning the unknown degradation costs while making real-time update decisions, without prior knowledge of the degradation cost patterns or their distributions. We frame this challenge as a novel variant of the multi-armed bandit (MAB) problem, named \textit{Timing Bandit}, where each arm corresponds to a candidate update interval. We reveal three structural properties of Timing Bandit. First, \emph{timing decisions}: selecting arm~$k$ commits the learner to operate for the next $k$~time steps before performing an update, so a
single decision occupies a variable number of rounds rather than a single slot. Second, \emph{arm cost composition}: the cost of an arm comprises the stochastic degradation cost that accumulates as the system ages during
the interval and a fixed update cost incurred at the end.
Third, \emph{consecutive feedback}: pulling an arm traverses every intermediate age in sequence, revealing degradation information relevant to every shorter interval. These structures distinguish our problem from existing variants of MAB problems (cf. Table~\ref{tab:bandit-comparison}) and necessitate novel algorithm design and analysis. To this end, we develop online algorithms that achieve provably near-optimal, sublinear regret bounds, where regret is defined as the performance gap relative to an optimal strategy with known degradation cost distributions. We summarize our \textbf{contributions} as follows.

$\rhd$ In Sec.~\ref{sec:model}, we introduce the update scheduling problem (\textbf{USP}), whose objective is to minimize the total cost comprising both stochastic degradation and update expenses by optimizing the timing of updates. We show that the optimal policy for minimizing the asymptotic time-average total cost reduces to a periodic one, i.e., updating at a fixed interval.

$\rhd$ Motivated by the optimality of periodic updates, in Sec.~\ref{sec:mab}, we frame the online setting, where the stochastic degradation costs are unknown a priori, as a novel \textit{Timing Bandit} problem. Here, each arm corresponds to a candidate update interval whose fixed update cost is known but whose degradation costs must be learned through interaction. We further identify three structural properties of \textit{Timing Bandit}, including timing decisions, arm cost composition, and consecutive feedback.  

$\rhd$ In Sec.~\ref {sec:algorithm}, we demonstrate that, resulting from the timing decision structure, a direct application of standard MAB algorithms, i.e., treating each of the $K$~candidate intervals as an independent arm, yields a regret of
$\tilde{O}(K\sqrt{T})$ rather than the usual
$\tilde{O}(\sqrt{KT})$.  We then propose an efficient online algorithm,  namely Balanced Consecutive Arm Elimination (\textsf{BCAE}), for the timing bandit. Built on arm elimination, our algorithm further leverages a novel idea of balancing the confidence gaps across all arms, enabling joint use of the specific arm cost composition and the consecutive feedback structures. We prove that our algorithm achieves sublinear regret bounds of~$\tilde{O}(\sqrt{T})$ for sufficiently large $T$, removing the $K$ multiplicative overhead found in applying standard MAB algorithms.  We show that the achieved regret matches the regret lower bound for the problem up to logarithmic factors.  While analytically convenient, the elimination paradigm adapts only at infrequent, discrete elimination events, limiting empirical responsiveness. To address this, we further propose an Optimism-Enhanced variant (\textsf{OE-BCAE}) that integrates Lower Confidence Bound (LCB) principles into the elimination framework, enabling continuous
adaptation while preserving the same order regret.

$\rhd$ \ifx \ISTR \undefined
We validate our theoretical findings through numerical evaluation in Sec.~\ref{sec:simulation} and evaluate the empirical performance on a remote monitoring instance using real-world traces. 
\else
We validate our theoretical findings through numerical evaluation in Sec.~\ref{sec:simulation} and evaluate the empirical performance on a remote monitoring instance using real-world traces in Sec.~\ref{sec:empirical}. 
\fi
The experiments demonstrate that our approaches, in particular \textsf{OE-BCAE}, achieve lower regret and remain robust to changes in both the number of arms and the update cost.  \ifx \ISTR \undefined
Due to space limitations, the results of the empirical evaluation are provided in our technical report~\cite{TechnicalReport}. 
\else
\fi

\section{Related Work}\label{sec:related-work}

\newcommand{\multicell}[1]{\begin{tabular}[c]{@{}c@{}} #1 \end{tabular}}

\begin{table}[!t]
\caption{Comparison with related stochastic multi-armed bandit (MAB) problems. }
\label{tab:bandit-comparison}
\centering
\begin{threeparttable}
\resizebox{\columnwidth}{!}{%
\begin{tabular}{@{}l@{\hskip 1pt}c@{\hskip 5pt}c@{\hskip 5pt}c@{\hskip 5pt}l@{}}
\toprule
\multicell{\multicell{\textbf{MAB} \\\textbf{Problems}}} & \multicell{\textbf{Timing} \\ \textbf{Decisions}} & \multicell{\textbf{Consecutive} \\ \textbf{Feedback}} & \multicell{  \textbf{Arm Cost} \\\textbf{Composition}} & \multicell{\textbf{Regret}}\\ \midrule
Standard, e.g.,~\cite{lattimore2020bandit}   & \cross               & \cross                         & \cross & $\tilde{O}(\sqrt{KT})^\star$    \\
One-Sided, e.g., \cite{Swapna2018Side,zhao2019,Li2021Waiting} & \cross & \tick & \cross & $\tilde{O}(\sqrt{T})^\star$\\
Side Info.~\cite{caron2012leveraging} & \cross & \tick & \cross & $\tilde{O}(\sqrt{KT})^\dag$\\
\multicell{{Batch}, e.g.,~\cite{Vianney2016Batched,Gao2019Batched,cao2023best} }     & \tick   & \tick             & \cross & {$\tilde{O}(h(\cdot))^\ddag{^\star}$ }\\
 \midrule
 \multicell{\multicell{{\textsf{Timing Bandit}} \\ {(this work)}}}     & \tick              & \tick              & \tick & $\tilde{O}(\sqrt{T})^\star$ 
\\ \bottomrule

\end{tabular}
}
\begin{tablenotes}[flushleft]
\footnotesize
\item $^\star$ Denotes the regret bound matches its lower bound up to logarithmic factors. 
\item $^\dag$ Result when applied to the consecutive or one-sided feedback case.
\item ${^\ddag}$ $h(\cdot)=\sqrt{K}T^{{1/(2-2^{1-N})}}$, where $N$ represents the number of batches. 
\end{tablenotes}

\end{threeparttable}
\vspace{-4mm}
\end{table}

\textbf{Multi-armed Bandit.}
We consider \emph{Timing bandit}, a novel variant of the classical multi-armed bandit (MAB) framework~\cite{slivkins2019introduction} with structural properties tailored to our setting. This variant differs from traditional MAB problems in decision-making, feedback mechanisms, and arm cost composition, which require new algorithm design and analysis, as we will elaborate in Sec.~\ref{sec:mab} and Sec.~\ref{sec:algorithm}. We summarize the comparison of the most-related variants of bandit problems in Table~\ref{tab:bandit-comparison}. While some properties of our problem can be considered individually as special cases of other MAB paradigms, there is no direct generalization from these paradigms that accounts for all of our problem's properties. In particular, in~\cite{caron2012leveraging,Swapna2018Side,zhao2019,Li2021Waiting}, the authors consider bandits in which pulling an arm also provides samples of arms with smaller (or larger) indexes, similar to our consecutive feedback structure. Also, the timing decision structure that pulling an arm, say $k$, occupies $k$ slots and reveals feedback of the first $k$ arms, can be viewed as pulling a batch of the first $k$ arms in the batched multi-armed bandit problem~\cite{Vianney2016Batched,Gao2019Batched,cao2023best}. However, we note that, rather than the learner choosing a subset of arms to observe, it is the temporal physics of the timing bandit that determines the timing of observations, and we cannot choose arbitrary batches. Other examples include that the way the arm cost is composed in our problem can be connected to the combinatorial multi-armed bandit problem~\cite{Chen2013Combinatorial,Chen2016Combinatorial, Chen2016Combinatorial2}, The paper in~\cite{Cayci2019Learning}, which focuses on the control of a renewal reward process, exhibits the property of consecutive feedback, where choosing a longer interrupt time also reveals feedback of a shorter one. {Here, we exploit all three properties of the problem with novel algorithmic design and achieve a provable, improved sublinear regret, which matches the lower bound regret within logarithmic factors.} 

\textbf{Age-of-Information Optimization.}
Our problem is also related to the Age-of-Information (AoI) optimization in communication networks~\cite{kaul2012real,yates2021age}. AoI captures the freshness of information received at the destination, and the general goal for AoI optimization is to maintain a low AoI by optimizing the update process. Our approach extends to AoI optimization scenarios where the age-dependent penalty follows some unknown random distribution, in contrast to prior studies that typically consider known, deterministic penalties~\cite{Bacinoglu2019finiteJournal,tripathi2019whittle}, known, random age-dependent rewards~\cite{chakraborty2025pilot}, or unknown adversarial penalties~\cite{Tripathi2021AoI,tripathi2021onlinearXiv}. In addition, our problem generalizes to age-dependent update cost, whereas constant update cost is commonly considered in the literature, ~\cite{Tripathi2021AoI,tripathi2021onlinearXiv}. Our work is closely related to the single-source, bandit-feedback framework explored in~\cite{Tripathi2021AoI}, where online decision-making is studied under adversarial penalty functions for information aging, using algorithms such as Follow the Perturbed Leader (FTPL) and EXP3. \Qiulin{Another line of work studies AoI bandits, e.g.,~\cite{Fatale2022AoI}, where
the arms are channels with unknown success probabilities and the age penalty is
deterministic.} In contrast, our study considers model degradation with an unknown, age-dependent distribution and further investigates how incorporating the specific problem structures can yield improved, near-optimal regret bounds.

\section{Model and Problem Formulation}\label{sec:model}

We consider a general scenario where a system operator maintains a deployed model in a dynamic environment, such as operating a machine learning inference model, a digital twin, or a remote monitoring system in volatile edge and IoT environments. As the underlying physical environment or data distribution evolves, the deployed model becomes stale, and the operator must determine the optimal schedule for retraining or synchronizing it. Our goal is to minimize the total cost, which comprises deterministic update costs (e.g., data acquisition and computation) and the stochastic performance degradation costs due to model staleness. We discuss the optimal solution with known degradation patterns and the performance metrics. 


\subsection{System Model}

We consider a slotted time horizon with $T$ slots. The deployed model is implemented at the beginning of the first slot. Similarly, we consider that updates of the model (if any) are conducted at the end of the slot and become available at the beginning of the next slot. An illustration of our system model is provided in Fig.~\ref{fig:AoI}.

\textbf{System Age and Freshness.}
To quantify staleness, we denote the age of the model at slot \(t\) as \(a_t\). The age evolves as follows.  It is initialized to one at slot \(1\). At any subsequent slot \(t\), if no update was performed at the end of the previous slot, the age increments by one. If an update was performed, the age resets to one. \footnote{Note that setting the model age to start from one is for the convenience of presentation. Our results remain valid when the age is initialized to zero.} Consequently, the age evolution is given by, 
\begin{equation}
   \hspace*{-5pt} a_t =\begin{cases}
        1, & \text{if $t=1$ or the model is updated at $t-1$;}\\
        a_{t-1}+1,& \text{otherwise}.\\
        
    \end{cases}
\end{equation}
We note that a smaller $a_t$ indicates a fresher model. In our problem, we consider that there is a minimum requirement on the model update, i.e., the model cannot stay un-updated for more than $K$ slots to guarantee a minimum level of freshness. In such a case, we have $a_t\leq K,\forall t\in[T]$. We note that our definition of model age is similar to the Age of Information (AoI) concept in communication networks~\cite{kaul2012real,yates2021age}; see more discussion in Sec.~\ref{sec:related-work}.

\begin{figure}[!t]
    \centering
    \includegraphics[width=.75             
    \columnwidth]{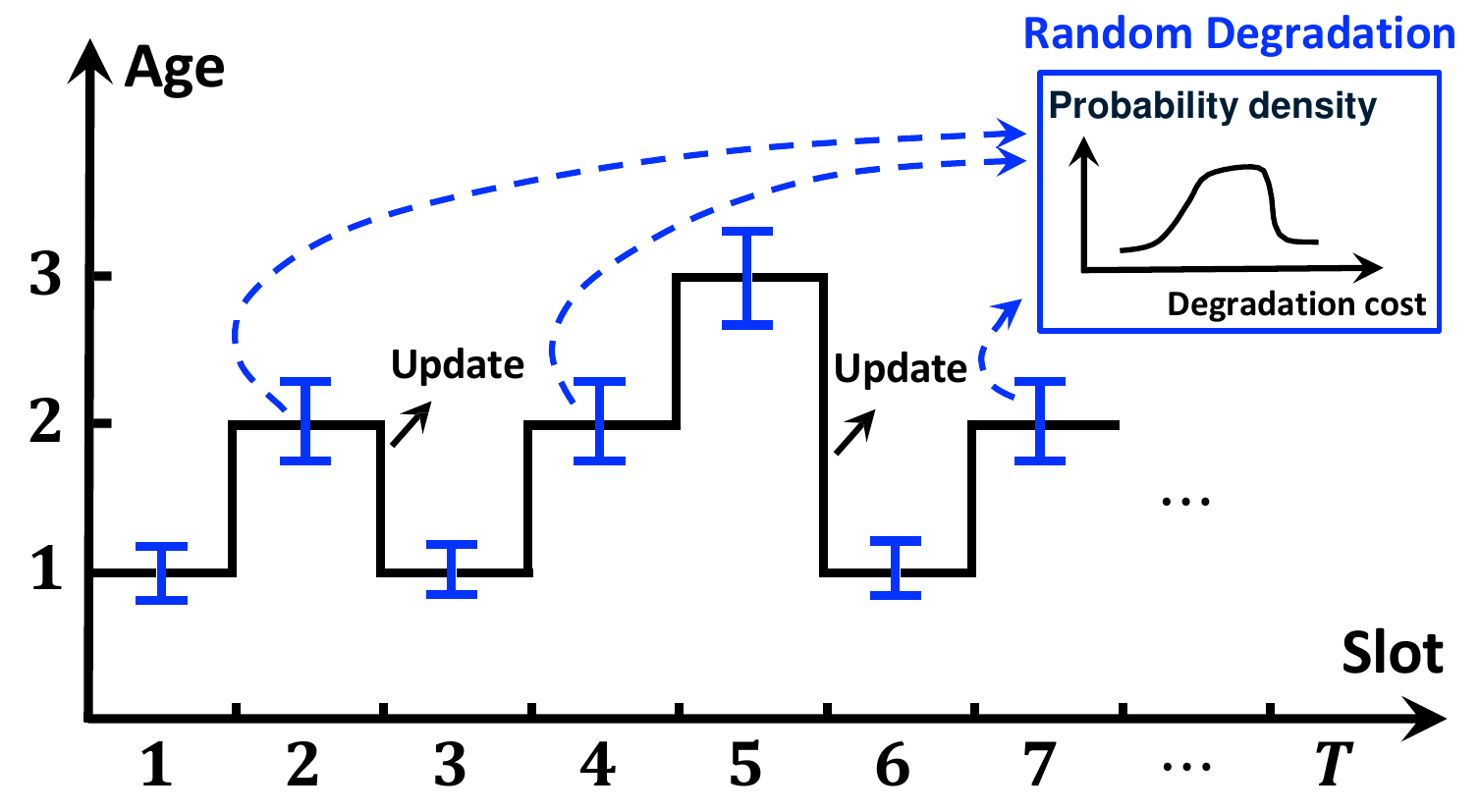}
    \caption{An illustration of the evolution of model age and the associated degradation cost. At each slot, the model age increments by one if no update is performed and resets to one upon an update. It incurs a random degradation cost following the probability distribution $f_a$ at age $a$. }
    \label{fig:AoI}
    \vspace{-2mm}
\end{figure}

\textbf{Degradation Cost.} Let $\mathcal{M}_{\tau}$ denote the version of the deployed model updated at slot $\tau$. At slot $t$, if the model’s age is $a_t$, the model being used at slot $t$ is the one updated at $t-a_t+1$, i.e., $\mathcal{M}_{t-a_t+1}$. We define the model degradation cost as the performance loss of applying $\mathcal{M}_{t-a_t+1}$ at slot $t$ and denote it as $g_t(\mathcal{M}_{t-a_t+1})$. In practice, performance degradation is driven by volatile and often unpredictable factors, such as sudden shifts in user behavior, environmental noise, or random state transitions in a physical system. To capture this inherent uncertainty, we model the degradation cost as a random variable. For ease of presentation, we simplify the notation to \(g_t(a_t)\).  
We assume that the degradation costs at different slots are independent and their distributions depend strictly on the model’s age. We let $f_a$ denote the probability density function (PDF) of the degradation cost for a given age $a$. Thus,  $g_t(a_t)$ is randomly generated from the distribution $ f_a$.  We assume that the degradation cost is upper bounded by a maximum constant $M$, i.e., $g_t(a_t)\leq M, \forall t, a_t$. We denote the expected value of $g_t(a_t)$ as $ \tilde{g}(a_t)$. 

\Qiulin{
\begin{remark}\label{remark:assumptions}
Our model assumes i.i.d.\ degradation costs for a given age and independent costs across slots. While these assumptions may not capture all non-stationary drift, they provide a tractable and non-trivial foundation for algorithm design. Here, we discuss the applicability of the model and where it is an approximation. The independence assumption is plausible in systems serving independent requests, where the degradation noise stems primarily from the variance of individual request feedback. Moreover, the update interval materially affects performance in such systems~\cite{Lee2023Recommender}, which supports modeling the degradation cost as a function of age. The age-based cost structure is also natural in remote monitoring of temporally correlated processes, where independence does not hold exactly; examples include remote estimation of a Wiener process~\cite{sun2019sampling} and the power-grid frequency trace of our empirical study \ifx \ISTR \undefined (Sec.~7 of the technical report~\cite{TechnicalReport}),
\else
(Sec.~\ref{sec:empirical}),
\fi
 on which the algorithms remain effective. The model is an approximation when the degradation is driven by observed context (e.g., ML retraining, digital twins) or the feasible action set by resource availability, rather than by age alone. These cases would require extensions, including a contextual-bandit formulation with observable context features and a constrained formulation with resource-dependent action sets. Further, time-varying degradation distributions could be handled by building on non-stationary~\cite{besbes2014stochastic} or rising-bandit~\cite{Rising2022Metelli} methods. We leave this for future work.
\end{remark}
}


\textbf{System Updating Cost.} Updating the deployed model incurs an explicit system cost, which may encompass data acquisition, computational overhead, and network transmission. We define \(C(a)\) as the cost to update an asset that currently has an age of \(a\). Because updates occur at the end of a slot, an update at slot \(t\) operates on an asset of age \(a_t\), incurring a cost of \(C(a_t)\). Formulating the cost as a function of age accommodates practical scenarios where updating a heavily degraded system requires more extensive data collection or longer retraining times, meaning \(C(a)\) may increase with \(a\).  It also generalizes from existing studies that consider a constant update cost, e.g.,~\cite{tripathi2019whittle,tripathi2021onlinearXiv}.

\begin{remark}\label{remark:update_cost} Unlike the degradation cost, which must be learned online, we assume the update cost $C(a)$ is known \emph{a priori}: update expenses are dominated by deterministic system-level resources (radio energy, bandwidth tariffs, cloud fees) that a designer can profile offline. \end{remark}

\subsection{Problem Formulation} \label{sec:formulation}

We aim to find an optimal update policy that minimizes the expected total cost, encompassing both the update cost and the degradation cost. At the beginning of slot \(t \in [T]\), the system operates with a model of age \(a_t\) and incurs a realized degradation cost \(g_t(a_t)\). At the end of slot \(t\), the decision-maker observes the history \(\mathcal{H}_t = \{a_1, \dots, a_t, x_1, \dots, x_{t-1}, g_1, \dots, g_t\}\) and makes an update decision \(x_t \in \mathcal{X}(a_t)\subseteq \{0,1\}\), where \(x_t=1\) denotes triggering an update and \(x_t=0\) denotes no update.  To meet the minimum requirement on the model update, we ensure the age does not exceed the maximum limit, \(K\). Then, the available action space depends on the current age: \(\mathcal{X}(a_t) = \{0, 1\}\) if \(a_t < K\), and \(\{1\}\) if \(a_t = K\).
A policy \(\pi = (\mu_1, \dots, \mu_T)\) is a sequence of decision rules where \(\mu_t\) maps the history \(\mathcal{H}_t\) to a valid action \(x_t \in \mathcal{X}(a_t)\). Let \(\Pi\) denote the set of all such non-anticipatory (causal) policies. The optimization problem is formulated as follows:

\begin{align}
\textsf{USP}:\quad \min_{\pi \in \Pi} \quad & \mathbb{E}^{\pi} \left[ \sum_{t=1}^{T} \left( g_t(a_t) + C(a_t) \cdot x_t \right) \right] \\
\text{s.t.} \quad & a_{t+1} = \begin{cases} 
                       1, & \text{if } x_t = 1,  \\
                       a_t + 1,  & \text{if } x_t = 0, \end{cases} \quad \forall t \in [T-1], \\
& a_1 = 1.
\end{align}

We consider that our problem starts from a freshly updated model ($a_1=1$). If the underlying distributions of the degradation costs were known \emph{a priori}, the Update Scheduling Problem (\textsf{USP}) could be solved optimally via standard dynamic programming. Under such known dynamics, we can also characterize the optimal policy for the asymptotic average cost objective, which exhibits a clear and highly interpretable structure. In practice, however, the operator rarely has access to these true degradation distributions beforehand. This motivates the core focus of our work: a challenging \emph{online scenario} where the decision-maker must simultaneously learn the unknown degradation profile while making real-time, cost-efficient update decisions. 

\subsection{Optimal Solution and Performance Metrics}

\textbf{Optimal Solution.} We consider minimizing the asymptotic average total cost defined as
\begin{equation} \label{eq:asym.avg.cots.minimization}
\limsup_{T\to\infty}\frac{1}{T}\,
\mathbb{E}^\pi\!\left[\sum_{t=1}^{T}\Big(g_t(a_t)+C(a_{t})x_t\Big)\right]
\end{equation}

We derive the optimal solution when the information of the expected degradation cost $\tilde{g}(a_t), \forall a_t\leq K$, is available.
\begin{proposition}\label{thm:offline}
    Given the expected degradation cost $\tilde{g}(a_t), \forall a_t\leq K$, the optimal solution to the problem in \eqref{eq:asym.avg.cots.minimization} is to update the learning model periodically with the optimal period $k^*$:
    \begin{equation}\label{eq:optimal_inter_update_time}
        k^*=\arg\min_{k\in[K]} \frac{\sum_{j=1}^{k}\tilde{g}(j)+C(k)}{k}.
    \end{equation} 
   And the optimal average cost is given by $\frac{\sum_{j=1}^{k^*}\tilde{g}(j)+C(k^*)}{k^*}$.
\end{proposition}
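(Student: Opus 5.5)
The plan is a renewal-reward decomposition followed by a ratio (mediant) inequality. Any policy $\pi\in\Pi$ splits the horizon into consecutive update cycles: a cycle starts at a slot with age one and ends at the next update. A cycle of length $k\in[K]$ has expected cost $\sum_{j=1}^{k}\tilde g(j)+C(k)$, because within the cycle the ages are $1,\dots,k$ and the final update is taken at age $k$. Write $R_k=\sum_{j=1}^k\tilde g(j)+C(k)$ and $\rho^*=\min_{k\in[K]} R_k/k$. For every $k$ we have $R_k\ge \rho^* k$, so the expected cost of any cycle is at least $\rho^*$ times its length, whatever its length is.

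\emph{Achievability.} I would take the periodic policy with period $k^*$. Its total expected cost over $T$ slots is $\lfloor T/k^*\rfloor R_{k^*}$ plus the cost of a partial last cycle, which is at most $K(M+\max_a C(a))$. Dividing by $T$ and letting $T\to\infty$ gives average cost $R_{k^*}/k^*=\rho^*$.

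\emph{Converse.} Fix an arbitrary causal policy and let $N_T$ be the (random) number of cycles completed by slot $T$. Since ages are at most $K$, the last, incomplete cycle covers at most $K-1$ slots. The key step is to handle the fact that cycle lengths are chosen adaptively from the history. I would do this with an indicator decomposition over slots. For each slot $t$, condition on $\mathcal H_{t-1}$ and the age $a_t$: $g_t(a_t)$ is independent of the past given $a_t$, so $\mathbb E[g_t(a_t)\mid \mathcal H_{t-1},a_t]=\tilde g(a_t)$. Summing over $t$ with the tower property yields
\[
\mathbb E\Big[\sum_{t=1}^T g_t(a_t)+C(a_t)x_t\Big]=\mathbb E\Big[\sum_{t=1}^T \tilde g(a_t)+C(a_t)x_t\Big].
\]
The right-hand side is a pathwise sum. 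Grouping its terms by completed cycles gives $\sum_{i=1}^{N_T}R_{L_i}$ plus the nonnegative contribution of the last partial cycle, where $L_i$ is the length of the $i$-th cycle. I assume costs are nonnegative, which is needed to discard that partial cycle; if it were not assumed, the partial cycle would still be bounded by a constant. Pathwise, $\sum_i R_{L_i}\ge \rho^*\sum_i L_i\ge \rho^*(T-K)$. Taking expectations, dividing by $T$ and taking the $\limsup$ gives a value of at least $\rho^*$ for every $\pi$.

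The main obstacle is the rigorous handling of the adaptive, history-dependent cycle lengths, which rules out applying the renewal-reward theorem directly. The conditioning step above, which reduces the problem to the deterministic quantity $\tilde g$ pathwise, is the first thing I would try. It avoids Wald-type arguments entirely, because once costs are replaced by their conditional means the lower bound holds for every sample path. The remaining care is only in the boundary term from the incomplete final cycle, which is $O(K)$ and vanishes after dividing by $T$.
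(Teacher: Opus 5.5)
Your argument is correct, but it takes a different route from the paper's.

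\textbf{The paper's route.} It casts \textsf{USP} as an average-cost MDP on the state $(a,g)$ and writes the average cost optimality equation (citing Hern\'andez-Lerma and Lasserre). The current $g$ enters only as an additive constant, so an optimal stationary deterministic policy depends on age alone. Such a policy updates at a fixed age $k$ and is therefore periodic. The paper then evaluates each period with the renewal reward theorem and minimizes over $k$.

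\textbf{Your route.} You prove the converse directly against every causal policy. The tower-property step replaces $g_t(a_t)$ by $\tilde g(a_t)$, which is legitimate because $a_t$ is fixed before $g_t$ is drawn. After that, the cost is a deterministic function of the cycle lengths, and the mediant bound $R_L\ge\rho^*L$ for $L\le K$ finishes the argument pathwise. Your nonnegativity assumption is consistent with the paper, which takes $g\in[0,M]$ and $C\ge 0$.

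\textbf{What your route buys.} It is elementary and self-contained. It needs neither solvability of the ACOE on a Borel state space nor the reduction to stationary policies, both of which the paper invokes without checking conditions. It handles adaptive and randomized policies without Wald or renewal arguments. It also gives a non-asymptotic statement: every policy pays at least $\rho^*(T-K)$ in expectation, while the $k^*$-periodic policy pays at most $\rho^*T+KM$. This fits naturally with the finite-$T$ bookkeeping of Lemma~\ref{thm:regret-redefination}. Finally, it enforces the forced update at age $K$ exactly. The paper's transition $A_{t+1}=\min(a+1,K)$, as written, also admits staying at age $K$ forever, a policy that $\mathcal{X}(K)=\{1\}$ excludes.

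\textbf{What the paper's route buys.} It gives a structural explanation of why the realized degradation is decision-irrelevant. It also places the result in a framework that extends to richer state dynamics, where a simple cycle-by-cycle ratio bound would no longer be available.
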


Because the degradation cost is independent across slots and does not affect system dynamics, the optimal policy depends only on age; the system reduces to a Renewal Reward process, and the Renewal Reward Theorem gives~\eqref{eq:optimal_inter_update_time}. It also suggests that an efficient online algorithm should aim to find the optimal update period. Please refer to 
\ifx \ISTR \undefined
our technical report \cite{TechnicalReport} for a detailed proof.
\else
Appendix~\ref{App:proof_of_offline}. 
\fi
\Qiulin{We note that Proposition~\ref{thm:offline} relates to Theorem~4 in~\cite{shisher2024timely},
which optimizes over policies that decide based on the age alone: under our
independence assumptions, the degradation history is decision-irrelevant, so our
age-based optimum agrees with theirs once we take the per-slot penalty as
$\tilde{g}(a)+C(a)-C(a-1)$.}

\textbf{Regret Minimization.} We use regret as a performance metric for evaluating the performance of an online policy. We denote the total cost of the optimal solution in Proposition~\ref{thm:offline} as $OPT(T)$ for a given period length $T$. Given an online algorithm, we denote its expected total cost as $ALG(T)$. We define the regret $R(T)$ as the difference between these two total costs, 
\begin{equation}
        R(T) = ALG(T) - OPT(T).
\end{equation}

In regret analysis, our goal is to develop online algorithms with low regret. Specifically, a sublinear regret ensures that the long-term average cost of the online algorithm converges to the same value as the optimal solution; see e.g.,~\cite{lattimore2020bandit}.



\section{A Novel Multi-Armed Bandit Variant}\label{sec:mab}

In this section, we first show that our problem can be viewed as a new variant of the multi-armed bandit (MAB) problem; we call it \emph{timing bandit}. We then discuss its problem structures. 

\subsection{Connecting \textsf{USP} with Multi-armed Bandit}

Following the optimal offline solution in~Proposition~\ref{thm:offline}, to achieve a lower regret in the online scenario, one should find the optimal update interval $k^*$ (the time between two successive updates) efficiently using the observed degradation costs from online update decisions. This follows a similar spirit to the multi-armed bandit problem, where we aim to learn the optimal arm with low regret. We connect the~\textsf{USP} problem to MAB by treating each update interval $k\in[K]$ as an arm. We define the average cost of arm $k$ as the average cost of choosing an update interval $k$ as

\begin{equation}\label{eq:cost_of_arm}
\mu_k\triangleq\frac{\sum_{t=1}^k \tilde{g}(i)+C(k)}{k}.
\end{equation} 

With the above definition, suppose we choose an update interval $k$, the average degradation cost and the update cost during these $k$ slots will be $k\cdot\mu_k$. Further, we denote $n_{k}(T)$ as the expected number of times that the update interval $k$ occurs in running an online algorithm. Then, we can show the following approximation of the regret, which is easy to interpret under the multi-armed bandit paradigm: 
\begin{equation}\label{eq:regret_alter}
    \tilde{R}(T) \triangleq \sum_{k=1}^{K} k\cdot \mu_k \cdot n_{k}(T) - T \cdot \mu_{k^*}.
\end{equation}

We give the bounds of this approximation in the following Lemma, which is due to the rounding issue of a finite $T$.
\begin{lemma}[Approximation of regret]\label{thm:regret-redefination}
The difference between the regret $R(T)$ and $\tilde{R}(T)$ is given by
\begin{equation}
    \left| R(T) - \tilde{R}(T) \right| \leq  K\cdot M.
\end{equation}
\end{lemma}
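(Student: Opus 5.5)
We split the difference $R(T)-\tilde R(T)$ into a term from the algorithm and a term from the optimal benchmark. Each of these is a boundary effect coming from the last, possibly truncated, interval. Since $R(T)=ALG(T)-OPT(T)$ and $\tilde R(T)=\sum_k k\mu_k n_k(T)-T\mu_{k^*}$, we write
\begin{equation*}
R(T)-\tilde R(T) = \Big(ALG(T)-\sum_{k=1}^K k\mu_k n_k(T)\Big) - \Big(OPT(T)-T\mu_{k^*}\Big).
\end{equation*}

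\textbf{The algorithm term.} Decompose the run of the online algorithm into intervals between consecutive updates. Every interval that is completed within the horizon, say of length $k$, contributes expected cost $\sum_{j=1}^k\tilde g(j)+C(k)=k\mu_k$. This holds because the degradation costs are independent of the history given the age, so conditioning on the algorithm's (history-dependent) choice of $k$ does not change the expected cost of the interval. Formally we would use a tower-property / optional-stopping argument over the interval start times.

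The only discrepancy is the final interval. It starts at some slot, runs through slot $T$, and may not end with an update, so it has some length $\ell\le K$. How $n_k(T)$ counts this interval determines the sign of the error, and we fix a convention that makes the two expressions differ by at most one partial interval. If it is counted as arm $k$, the error is $k\mu_k-\sum_{j\le \ell}\tilde g(j)$, which is the cost of the missing slots plus $C(k)$. If it is not counted, the error is the realized $\sum_{j\le\ell}\tilde g(j)$. In either case the error lies in $[0,\ell M]$ up to the update-cost term. We bound it by $KM$, absorbing $C$ either by assuming $C(k)\le$ the relevant scale or by noting that the final update is counted consistently on both sides.

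\textbf{The benchmark term.} $OPT(T)$ is the periodic policy with period $k^*$. It consists of $\lfloor T/k^*\rfloor$ full periods, each costing exactly $k^*\mu_{k^*}$, followed by a residual of $r<k^*\le K$ slots. So $OPT(T)-T\mu_{k^*}$ equals the residual's degradation cost minus $r\mu_{k^*}$. This lies within $[-rM,rM]$ roughly, since $\mu_{k^*}$ and the degradation per slot are both of order $M$.

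\textbf{Combining and the main obstacle.} The two errors must be combined so that they partially cancel rather than add, in order to get $KM$ instead of $2KM$. Both errors are single-sided boundary terms of the form "cost of at most $K$ slots". The algorithm error and the benchmark error enter with opposite signs, and each lies in an interval of length at most $KM$ on the same side. So their difference has absolute value at most $KM$.

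The main obstacle is the careful bookkeeping of the final truncated interval and the update cost $C(\cdot)$, which makes the one-sided claims hold. If the constants do not close exactly, we would first try to restate the convention for $n_k(T)$ so that the last interval counts only fully completed arms. Then both boundary errors are nonnegative, at most $KM$ in magnitude, and the bound follows.
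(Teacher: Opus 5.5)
\textbf{The proposal has a genuine gap in the benchmark term, and the lemma as literally stated cannot be proved with $OPT(T)$ taken as the finite-horizon cost of the periodic policy.}

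\textbf{Algorithm term.} This part is fine once you count only completed intervals in $n_k(T)$, but justify it slot by slot rather than per interval. For a general non-anticipatory policy the interval length is a stopping time that can depend on the costs observed inside the interval. So conditioning one interval on its length does not give $k\mu_k$. Instead, use $\mathbb{E}[g_t(a_t)\mid\mathcal{H}_{t-1}]=\tilde g(a_t)$ and decompose pathwise. This gives $ALG(T)-\sum_k k\mu_k n_k(T)=\mathbb{E}\big[\sum_{j\le L}\tilde g(j)\big]\in[0,(K-1)M]$, where $L$ is the length of the final interval not closed by an update.

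\textbf{Benchmark term (the gap).} You claim $OPT(T)-T\mu_{k^*}=\sum_{j\le r}\tilde g(j)-r\mu_{k^*}$ lies in $[-rM,rM]$ because $\mu_{k^*}$ is ``of order $M$''. But $\mu_{k^*}$ contains the amortized update cost $C(k^*)/k^*$, while $M$ bounds only the degradation cost; the paper tracks $\bar C=\max_k C(k)$ separately. Your combining step is also unsupported. This term has no fixed sign, and your final fix (changing the convention for $n_k(T)$) does not touch it, since it does not involve $n_k(T)$.

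\textbf{Counterexample.} No choice of constants or conventions closes this. Take $K=2$, $g\equiv 0$ (so $M=1$ is admissible), $C(1)=C(2)=c$ and $T=3$. Then $\mu_1=c$, $\mu_2=c/2$ and $k^*=2$. Let the algorithm be the periodic policy itself, so $ALG(3)=OPT(3)=c$ and $R(3)=0$. Yet $\tilde R(3)=-c/2$ if only completed intervals are counted, and $\tilde R(3)=c/2$ if the truncated last slot is counted as a pull of arm $1$ or of arm $2$. Thus $|R(3)-\tilde R(3)|=c/2>KM$ for $c>4$.

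\textbf{What can be proved.} There are two options.
\begin{itemize}
\item Read $OPT(T)$ as $T\mu_{k^*}$. This is effectively what the paper's one-line justification (a single truncated interval from rounding at finite $T$) amounts to. The benchmark term then vanishes, and your algorithm-term argument alone gives the lemma.
\item Keep the finite-horizon benchmark. The optimality of $k^*$, i.e.\ $\mu_{k^*}\le\mu_r$, supplies the one-sided control your combining step was missing: $-rM\le r\mu_{k^*}-\sum_{j\le r}\tilde g(j)\le C(r)$. This yields $|R(T)-\tilde R(T)|\le (K-1)M+\bar C$. That suffices for the paper's order-level use of the lemma, but it is not the stated $KM$.
\end{itemize}
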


Lemma~\ref{thm:regret-redefination} implies that an online algorithm achieves the same order of regret bounds under $R(T)$ and $\tilde{R}(T)$. We thus focus on minimizing $\tilde{R}(T)$ in the following discussion. 

\subsection{Timing Bandit and Problem Structures} \label{sec:t-mab}

\begin{figure}[!t]
    \centering
    \includegraphics[width=.8\columnwidth]{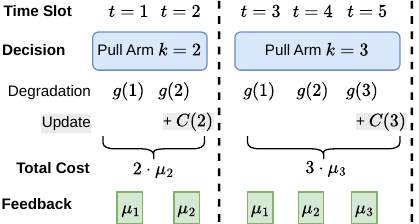}
    \caption{An illustrative example of timing bandit. Suppose we first pull arm $2$. We will observe the degradation cost of age $1$ and age $2$ in the following two slots, from which we can derive feedback on arm $1$ and arm $2$. The cost of such a pull includes the degradation cost and the update cost (at age $2$), the expectation of which equals $2\cdot\mu_2$ (according to~\eqref{eq:cost_of_arm}).  }
    \label{fig:tb}
    \vspace{-3mm}
\end{figure}

We are now ready to relate our problem to the multi-armed bandit paradigm with specific structures; we call it \emph{Timing Bandit}, referring to making timing update decisions.  We consider each update interval $k\in[K]$ as an arm in a $K$-arm multi-armed bandit problem. Each arm $k$ is associated with a random cost defined by $(\sum_{i=1}^{k}g(i)+C(k))/k$, where $g(i)\sim f_i$, and the mean cost of the arm is $\mu_k$ (defined in~\eqref{eq:cost_of_arm}) for all $k\in[K]$. {We provide an illustrative example in Fig.~\ref{fig:tb}.} The \emph{timing bandit} paradigm differs from the standard MAB problem in its decision-making, feedback structure, and specific arm cost composition, as detailed below:
\begin{enumerate} 
    \item Timing decision-making: in standard MAB, the decision is made at each time slot $t$ and incurs a single random cost associated with the chosen arm. In contrast, in the timing bandit, pulling arm $k$ would occupy the next $k$ time slots and would incur $k \cdot \mu_k$, i.e., $k$ times the expected cost of the arm.
    \item Arm cost composition: In timing bandit, the cost of arm $k$ is defined as the sum of two components: the average of $k$ random degradation costs, $g(i)$ from age $1$ to age $k$, and a fraction of the update cost, $C(k)/k$. This contrasts with standard MAB, where the cost of an arm is given by a single random variable associated with it.
    \item Consecutive feedback: in standard MAB, we receive feedback immediately after pulling an arm, which is a sample of the random cost associated with the arm. In the timing bandit, pulling arm $k$ yields feedback from arm $1$ to arm $k$ consecutively over the next $k$ rounds.
    
\end{enumerate}




Intuitively, the timing decision-making property makes the timing bandit problem more challenging, since pulling one arm would occupy more time slots and incur more cost, while only giving one sample of the pulled arm. In contrast, the second property turns out to allow a tighter confidence bound for estimating the arm cost that facilitates the learning, as we will discuss in Lemma~\ref{thm:more_information}.  The third property can facilitate learning by providing richer feedback from a single pulling. 

In the following, we characterize the challenge due to the timing decision-making structure, which also reveals the regret bound of directly applying the standard MAB algorithm to our problem without utilizing the latter two properties. 

\begin{proposition}
\label{thm:structure_lower_bound}
Consider a multi-armed bandit problem with $K$ arms and random costs with unknown means $\mu_k,\forall k\in[K]$, where pulling an arm (say arm $k$) will occupy $k$ slots, incur $k$ times its cost, and receive one sample on its cost. Then, any online algorithm for the problem will incur an expected regret of at least $\Omega(K\sqrt{T}/\log K)$. 
\end{proposition}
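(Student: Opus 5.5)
We would prove Proposition~\ref{thm:structure_lower_bound} by a reduction from the classical minimax lower bound for stochastic bandits. The construction uses a family of hard instances in which the \emph{time-weighted} cost of exploring an arm is inflated, so that the usual $\sqrt{KT}$ bound becomes $K\sqrt{T}$ up to a logarithmic factor.

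The key observation is the following. In this model, pulling arm $k$ costs $k$ slots and contributes $k(\mu_k-\mu_{k^*})$ to $\tilde R(T)$, yet it yields only one sample of $\mu_k$. The effective per-sample price of information about arm $k$ is therefore proportional to $k$. The plan is to decompose the arms into dyadic blocks $B_j=\{k: 2^{j}\le k<2^{j+1}\}$, $j=0,\dots,\lfloor\log_2 K\rfloor$. Inside block $B_j$ all arms have duration within a factor $2$ of $2^j$. Restricted to this block, the problem therefore behaves like a standard $|B_j|\approx 2^j$-armed bandit whose "rounds" each last about $2^j$ slots, for an effective horizon of $T_j\approx T/2^j$ pulls.

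Next we construct the instances. Fix a block $B_j$ and use Bernoulli-type costs (bounded by $M$) with means $\mu_k=1/2$ for all $k\in B_j$, except for one uniformly random arm whose mean is $1/2-\Delta_j$. All arms outside $B_j$ receive mean $1$, so they are clearly suboptimal. The standard argument (KL divergence and Pinsker, as in Lattimore–Szepesv\'ari, Thm.~15.2), applied to the $2^j$ arms with $T_j$ pulls and $\Delta_j\asymp\sqrt{2^j/T_j}$, gives $\Omega(\sqrt{2^jT_j})$ suboptimal pulls weighted by the gap. Each such pull costs about $2^j\Delta_j$ in slot units, so the regret is
\[
\Omega\!\left(2^j\sqrt{2^j\cdot T/2^j}\right)=\Omega\!\left(2^j\sqrt{T}\right).
\]
Pulling an arm outside $B_j$ only wastes time and never helps distinguish arms inside $B_j$, because each pull reveals just its own sample. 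Taking $j=\lfloor\log_2 K\rfloor$ already yields $\Omega(K\sqrt T)$. If the uniform-in-block instance is instead chosen as a mixture over blocks, with the optimal arm placed in a random block, or if the weights $1/\log K$ are needed to balance blocks when the gaps must be commensurate, we lose at most a $\log K$ factor, matching the stated $\Omega(K\sqrt T/\log K)$. A cleaner choice is to put the mixture weight $\propto 1/k$ across all arms, i.e.\ a harmonic prior, which produces exactly the $\log K$ normalization.

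The main obstacle is making the time-accounting rigorous. The horizon is fixed in slots rather than pulls, so the number of pulls is random and depends on the policy. We would handle this by stopping-time arguments: on an event of constant probability, the learner spends at least $T/2$ slots in arms of a given block, or else it incurs linear regret directly because the other arms are suboptimal by a constant. We would also confirm that the approximation $\tilde R(T)$ versus $R(T)$ from Lemma~\ref{thm:regret-redefination} only adds $KM$, which is negligible.
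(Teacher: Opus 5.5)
Your approach is sound and genuinely different from the paper's, and it proves more than the statement asks.

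\textbf{How the paper does it.} The paper keeps all $K$ arms in the hard family, with instance $i$ lowering only arm $i$. It applies the Auer et al.\ change-of-measure bound $\mathbb{E}_i[n_i]\le\mathbb{E}_0[n_i]+\frac{T}{i}\Delta\sqrt{\mathbb{E}_0[n_i]}$, using $n_i\le T/i$. It then controls $\sum_i\sqrt{\mathbb{E}_0[n_i]}$ by Cauchy--Schwarz against the weights $1/i$: $\sqrt{\sum_i 1/i}\,\sqrt{\sum_i i\,\mathbb{E}_0[n_i]}\lesssim\sqrt{T\log K}$. That harmonic sum is exactly where the logarithm enters: short arms are cheap to sample, so their instances are easy. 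With $\Delta\asymp K/\sqrt{T\log K}$ the paper obtains $\Omega(K\sqrt{T/\log K})$.

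\textbf{How your route differs.} You make the short arms clearly suboptimal and place the hard family only on arms of length $\Theta(K)$. The learner then has only $O(T/K)$ informative samples to spread over $\Theta(K)$ arms, while each wrong pull costs $\Theta(K)\Delta$. With $\Delta\asymp K/\sqrt{T}$ this gives $\Omega(K\sqrt{T})$ with no logarithm. That implies the stated bound and matches the $\tilde O(K\sqrt T)$ rate of standard algorithms. The paper's route is a single uniform computation with no extra arms to manage; yours is sharper and isolates the long arms as the source of hardness. Your mixture-over-blocks and harmonic-prior remarks are therefore unnecessary and can be dropped.

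\textbf{Two fixes needed when you write it out.}

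First, the top block $B_{\lfloor\log_2 K\rfloor}$ is truncated at $K$ and can be a single arm (e.g.\ $K=2^j$), in which case it gives nothing. Use $B=\{\lceil K/2\rceil,\dots,K\}$ instead, or the block with $j=\lfloor\log_2K\rfloor-1$. This has at least $K/2$ arms, all of length in $[K/2,K]$.

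Second, do not cite Lattimore--Szepesv\'ari Thm.~15.2 as a black box: it assumes a fixed number of rounds and no extra arms. Rerun the change of measure directly in slot units, which is easier than your stopping-time sketch.
\begin{itemize}
\item Since out-of-block arms have gap at least $\Delta$, you get $R_i\ge\Delta\,(T-K-K\,\mathbb{E}_i[n_i])$ outright.
\item The Pinsker step uses the deterministic bound $n_i\le T/i\le 2T/K$.
\item The divergence decomposition $\mathrm{KL}(P_0,P_i)=\mathbb{E}_0[n_i]\,\mathrm{kl}(\tfrac12,\tfrac12-\Delta)$ remains valid with a random number of pulls.
\item You have $\sum_{i\in B}\mathbb{E}_0[n_i]\le 2T/K$.
\end{itemize}
Averaging over $i\in B$ and taking $\Delta=cK/\sqrt{T}$ then gives $\Omega(K\sqrt T)$. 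This requires $T\gtrsim K^2$ so that $\Delta$ is a small constant; the paper's argument implicitly needs the same regime.
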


The proof follows the lower bound proof for the standard MAB (see Theorem 5.1 in~\cite{auer2002nonstochastic}) but with careful refinement to fit our case. Proposition~\ref{thm:structure_lower_bound} shows that the timing decision property increases the regret lower bound by a multiplicative factor of $\sqrt{K}/\log K$ compared to the standard MAB setting. This indicates that maintaining low regret is fundamentally more challenging for any online algorithm under this property. However, as we will show, the other two properties can help mitigate this difficulty through appropriate algorithmic design. 




\begin{remark}\label{remark:aoi_result} Our mathematical formulation is related to the time-varying cost of AoI optimization studied in~\cite{Tripathi2021AoI} (specifically, the single-sourcing case with bandit feedback, which is shown in the long version at arXiv~\cite{tripathi2021onlinearXiv}). The authors consider the adversary cost setting and apply the standard EXP3 algorithm. We note that directly applying their result to our setting will incur a regret of $\tilde{O}(K\sqrt{T})$, which is consistent with our finding in Proposition~\ref{thm:structure_lower_bound}; 
\ifx \ISTR \undefined
{see more discussion in our technical report~\cite{TechnicalReport}}.
\else
see more discussion in Appendix~\ref{app:compareTAoI}. 
\fi
We will show an improved regret by exploring the problem structures next.



\end{remark}

\section{Algorithm and Regret Analysis}\label{sec:algorithm}


In this section, we introduce an online algorithm, \textsf{BCAE}, that leverages the structures of the timing bandit. We establish that \textsf{BCAE} achieves a regret of $\tilde{O}(\sqrt{T})$. Additionally, we present a matching regret lower bound of $\Omega(\sqrt{T})$. We further propose an Optimism-Enhanced variant, \textsf{OE-BCAE}, that integrates lower confidence bound principles to improve empirical adaptivity while preserving the same regret order.

\subsection{Our Proposed Algorithm: \textsf{BCAE}}\label{sec:alg_dis}


Our algorithm, namely Balanced Consecutive Arm Elimination (\textsf{BCAE}), integrates the arm elimination strategy from the standard stochastic MAB literature (see e.g., a book of MAB~\cite{slivkins2019introduction}) while exploring the special structures of the timing bandit. Specifically, our algorithm introduces a tighter confidence bound by the arm cost composition and a balanced consecutive feedback idea to efficiently explore the consecutive feedback and the tighter confidence bound. We provide a pseudocode of the algorithm in Algorithm~\ref{alg:ae_independent}. 

In our \textsf{BCAE} algorithm, we maintain a set $\mathcal{A}$ of candidate arms that includes the optimal arm $k^*$ with high probability. After making the last update, we check our cost estimation of the arms in the candidate set $\mathcal{A}$. If the cost of an arm is far away from the optimal one we observed so far, we eliminate such an arm from $\mathcal{A}$. We note that we use $n_k(t)$ to denote the number of times arm $k$ is pulled before slot $t$. We use $m_k(t)$ to denote the number of samples for arm $k$ before slot $t$. For ease of the presentation, we simplify them to be $n_k$ and $m_k$ in Algorithm~\ref{alg:ae_independent}, which represent the latest value of $n_k(t)$ and $m_k(t)$. Similarly, we use $\bar{\mu}_k$ to denote the latest estimation of $\mu_k$. Our algorithm introduces the following new designs.

\textbf{Tighter Confidence Bound.} Following the arm cost composition property in Sec.~\ref{sec:t-mab}, with $m_k(t)$ samples of $\mu_k$ (each including samples of $g(a),\forall a\in[k]$), $\bar{\mu}_k$ becomes that average of $k\cdot m_i(t)$ bounded, independent random variables. According to Hoeffding Inequality, it allows us to derive a tighter confidence bound for the estimation of $\mu_k$, i.e., $|\mu_k-\bar{\mu}_k|\leq  \sqrt{\frac{M^2\log (\sqrt{K}T)}{k\cdot m_k}}$ with high probability, compared with the standard bound (which replaces $k\cdot m_k$ by $m_k$). We formalize it in Lemma~\ref{thm:more_information} in the regret analysis. It leads to a refined condition for arm elimination, Line~\ref{line:thresholding_b} in Algorithm~\ref{alg:ae_independent}. 

\textbf{Balanced Confidence Gaps with Consecutive Feedback.} To explore the consecutive feedback, existing literature, e.g.,~\cite{Swapna2018Side,zhao2019,Li2021Waiting}, proposed the idea to pull the arm with the maximum index in the candidate set $\mathcal{A}$, thereby efficiently collecting samples for all arms in $\mathcal{A}$ with a single pull. While it seems natural to combine this maximum-index-pulling idea with the tighter confidence bounds, this direct combination is insufficient to significantly reduce regret. This is because in arm elimination, the elimination condition (see Line~\ref{line:thresholding_b} in Algorithm~\ref{alg:ae_independent})
relies on shrinking both confidence bounds of the compared arm pairs. The maximum-index-pulling method only ensures the same number of samples, $m_i,\forall i\in\mathcal{A}$. The arms with smaller indices have looser confidence bounds, compared with larger ones (due to the $i\cdot m_i$ term in the denominator), and will dominate the number of samples needed to shrink the confidence bounds to a certain level, leading to a limited utilization of the tighter confidence bound. 


\begin{figure}[!t]
    \centering
    \includegraphics[width=.9\columnwidth]{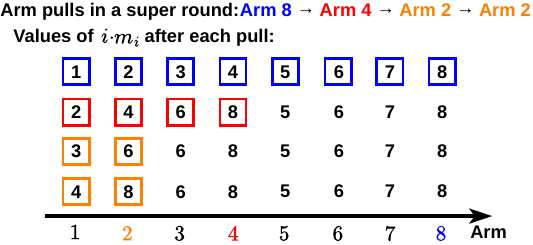}
    \caption{An illustrative example of \textsf{BCAE} in a super-round. Suppose there are 8 arms. Then, the anchor arms are $\{2,4,8\}$. Each square shape represents an observed sample; e.g., pulling arm 8, we have samples from all arms. The numbers represent the values of $i\cdot m_i$ of the arms after each pull. We see that at the end of the super-round, the values of $i\cdot m_i$ are within a factor of two of each other, which leads to a balanced confidence interval among all the arms. }
    \label{fig:BCAE}
    \vspace{-2mm}
\end{figure}

Instead, we introduce the idea of strategically balancing the confidence gaps,  $\sqrt{\frac{M^2\log (\sqrt{K}T)}{i\cdot m_i}}$, for all arms, while efficiently utilizing the consecutive feedback. {We provide an illustration in Fig.~\ref{fig:BCAE}.}  We introduce the concept of super-round, which includes a number of consecutive arm pullings. It corresponds to the period of running Line~\ref{line:arm_selection_b} to Line~\ref{line:end_super} for one iteration.  Our design is to balance the confidence gaps of all arms at the end of each super-round, i.e., ensuring that $i\cdot m_i,\forall i\in\mathcal{A}$ are close to each other. Specifically, at a super-round, we identify arms with index $2^{\hat{j}-\tilde{j}},\forall \tilde{j}\in\{0,1,\cdots \hat{j}-1\}$, where  $\hat{j}=\lceil\log_2 \hat{k}\rceil$ and $\hat{k}$ is the maximum index in $\mathcal{A}$. We call these arms anchors. For each anchor arm $ 2^{\hat{j}-\tilde{j}}, \tilde{j}\in\{0,1,\cdots \hat{j}-1\}$, we pull it $\hat{z}=2^{\tilde{j}}-2^{\tilde{j}-1}$ times (or one time when $\tilde{j}=0$). In case the anchor arm has been eliminated, we delegate the updates to the closest arm to it from the left, $\tilde{i}:=\max_i\{i\in\mathcal{A} | i\leq 2^{\hat{j}-\tilde{j}}\}$. 

For illustrative purposes, we assume that all anchor arms remain in $\mathcal{A}$. For each anchor arm $2^{\hat{j}-\tilde{j}}$, we obtain $1+\sum_{{j}=1}^{\tilde{j}}(2^{{j}}-2^{{j}-1})=2^{\tilde{j}}$ samples, either by pulling it or anchor arms with larger index (exploring the consecutive feedback). It ensures the increment of $i\cdot m_i$ for all anchor arms $i$ are similar, i.e., $2^{\hat{j}-\tilde{j}}\cdot 2^{\tilde{j}}=2^{\hat{j}}\simeq\hat{k}, \forall \tilde{j}$. For each other arm, it obtains the same number of samples in the super-round as the anchor arm with the smallest index greater than its own, due to the consecutive feedback structure. And as its index is at least half of such an anchor arm, its increment in $i\cdot m_i$ is also at least half of such an anchor arm. We provide a formal statement of this property in Lemma~\ref{thm:comparable_c_i} within our regret analysis. 


For the rest of the algorithm, after each arm pull, we update our estimates for arms with indices smaller than $\tilde{i}$. We count the number of times the arm is pulled, $n_{\tilde{i}}$, and the number of samples we obtained for the arms, $m_{i},\forall i\leq \tilde{i}\; \text{and}\; i\in\mathcal{A}$. We apply the arm elimination condition to eliminate non-optimal arms.  These correspond to Line~\ref{line:arm_pulling_b_b} to Line~\ref{line:first_end_b_b} in Algorithm~\ref{alg:ae_independent}. We repeat the process until the end of the horizon $T$.

\begin{algorithm}[!t]
\caption{\mbox{Balanced Consecutive Arm Elimination \textsf{(BCAE)}}\label{alg:ae_independent}}
\begin{algorithmic}[1]
\STATE Keep the model for $K$ slots, receive degradation cost $g_t(a_t)$ with $a_t = t$, $\forall t\in[K]$. Perform an update at the end of slot $K$. 
\STATE Initialize $\bar{\mu}_{k}=\frac{1}{k}\left(C(k)+\sum_{i=1}^{k}g_{i}(i)\right)$.
\STATE Initialize $\mathcal{A}=\{1,2,\cdots,K\}$.
\STATE $t=K$ and $m_{k}=1$, $\forall k\in[K]$.
\STATE $j^*=\arg\min_{i\in{\mathcal{A}}} \bar{\mu}_{i}+\sqrt{\frac{M^2\log (\sqrt{K}T)}{i\cdot m_i}}$. \label{line:baseline_1}
\FOR{$i\in \mathcal{A}$}
\STATE If {$\bar{\mu}_{i}-\sqrt{\frac{M^2\log (\sqrt{K}T)}{i\cdot m_i}}> \bar{\mu}_{j^*}+ \sqrt{\frac{M^2\log (\sqrt{K}T)}{j^*\cdot m_{j^*}}}$}, $\mathcal{A}= \mathcal{A} - \{i\}$. \label{line:thresholding}
\ENDFOR \label{line:first_end_b}
\WHILE{ $t < T$}
\STATE Let $\hat{k}=\max_k \{k\in\mathcal{A}\}$, and $\hat{j}=\lceil\log_2 \hat{k}\rceil$. \label{line:arm_selection_b}
\STATE If $\hat{j}=0$, Update the model at the end of the slot; $t=t+1$; Continuous.
\FOR{$\tilde{j}=0:1:\hat{j}-1$}
\STATE Let $\tilde{i}=\max_i\{i\in\mathcal{A} | i\leq 2^{\hat{j}-\tilde{j}}\}$. If no such $\tilde{i}$, BREAK.\label{line:arm_pulling_b}
\STATE Let $\hat{z}=1$ if $\tilde{j}= 0$; otherwise, $\hat{z}=2^{\tilde{j}}-2^{\tilde{j}-1}$.
\FOR{$z=1:1:\hat{z}$}
\STATE Let $\tilde{i}=\max_i\{i\in\mathcal{A} | i\leq 2^{\hat{j}-\tilde{j}}\}$. If no such $\tilde{i}$, BREAK.\label{line:arm_pulling_b_b}
\STATE If $t+\tilde{i} > T$, Keep the model till $T$ and EXIT.
\STATE Pull arm $\tilde{i}$, i.e., Keep the model for $\tilde{i}$ slots, observe   $g_{t+1}(1),g_{t+2}(2)\cdots g_{t+\tilde{i}}(\tilde{i})$, and  update the model.
\STATE Update $t=t+\tilde{i}$; Update $n_{\tilde{i}}=n_{\tilde{i}}+1$.
\FOR{$i\in \mathcal{A}$ and $i\leq \tilde{i}$}
\STATE Obtain a sample of $\mu_{i}$, i.e., let  $\hat{\mu}_{i}=\frac{1}{i}\left(C(i)+\sum_{k=1}^{i}g_{t+k}(k)\right)$;\label{line:update_mu}
\STATE Update $\bar{\mu}_{i}=\frac{m_{i}\cdot\bar{\mu}_{i}+\hat{\mu}_{i}}{m_{i}+1}$; Update $m_{i}= m_{i}+1$.
\ENDFOR
\STATE $j^*=\arg\min_{i\in{\mathcal{A}}} \bar{\mu}_{i}+\sqrt{\frac{M^2\log (\sqrt{K}T)}{i\cdot m_i}}$. \label{line:baseline}
\FOR{$i\in \mathcal{A}$}
\STATE If {$\bar{\mu}_{i}-\sqrt{\frac{M^2\log (\sqrt{K}T)}{i\cdot m_i}}> \bar{\mu}_{j^*}+ \sqrt{\frac{M^2\log (\sqrt{K}T)}{j^*\cdot m_{j^*}}}$}, $\mathcal{A}= \mathcal{A} - \{i\}$. \label{line:thresholding_b}
\ENDFOR \label{line:first_end_b_b}
\ENDFOR

\ENDFOR \label{line:end_super}
\ENDWHILE
\end{algorithmic}

\end{algorithm}

\subsection{Regret Analysis}\label{sec:regret_analysis}

Here, we illustrate the key properties of our proposed \textsf{BCAE} algorithm. We provide a theoretical analysis of its performance on regret minimization. Furthermore, we establish a corresponding regret lower bound for the~\textsf{USP} problem, demonstrating the near-optimality of our approach.  
\ifx \ISTR \undefined
Due to space limitations, we provide the main proof ideas here. A full version with proofs is  in~\cite{TechnicalReport}.
\else

\fi


We first formally describe the tighter confidence bound we applied in our algorithm~\textsf{BCAE} and the balanced confidence gaps guaranteed by our designs in the algorithm.


\begin{lemma}[Tighter Confidence Bound]\label{thm:more_information}
\begin{equation}\label{eq:confidence_bound}
   \mathbf{P}\left(|\bar{\mu}_k(t)-\mu_k|\geq \sqrt{\frac{M^2\log (\sqrt{K}T)}{k\cdot m_k(t)}}\right)\leq 2\cdot K^{-1}T^{-2}.
\end{equation}
\end{lemma}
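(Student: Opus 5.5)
The plan is to write $\bar{\mu}_k(t)$ explicitly as an average of independent bounded variables and then apply Hoeffding's inequality. Since $C(k)$ is deterministic and known, the $\ell$-th sample of arm $k$ is $\hat{\mu}_k^{(\ell)}=\frac{1}{k}\big(C(k)+\sum_{a=1}^{k} g^{(\ell)}(a)\big)$. Hence
\[
\bar{\mu}_k(t)-\mu_k=\frac{1}{k\,m_k(t)}\sum_{\ell=1}^{m_k(t)}\sum_{a=1}^{k}\big(g^{(\ell)}(a)-\tilde{g}(a)\big).
\]
The update cost cancels, so the deviation is the centered mean of $k\cdot m_k(t)$ degradation samples. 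These samples are mutually independent because costs at distinct slots are independent, and each lies in $[0,M]$.

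Hoeffding's inequality for $n=k\,m$ variables with range $M$ gives, for a fixed $m$,
\[
\mathbf{P}\big(|\bar{\mu}_k-\mu_k|\ge \epsilon\big)\le 2\exp\!\Big(-\frac{2 n\epsilon^2}{M^2}\Big).
\]
We take $\epsilon^2=M^2\log(\sqrt{K}T)/(k\,m)$. The exponent becomes $-2\log(\sqrt{K}T)$, so the bound is $2(\sqrt{K}T)^{-2}=2K^{-1}T^{-2}$, which matches \eqref{eq:confidence_bound}.

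The main obstacle is that $m_k(t)$ is random and depends on past observations, through elimination decisions and through which anchor gets pulled. Hoeffding therefore cannot be applied directly with a random sample size. I would handle this with the standard stack-of-rewards (skeleton) argument. Pre-generate, for each age $a$, an i.i.d. sequence of samples from $f_a$. The $\ell$-th sample of arm $k$ is then the $\ell$-th entry of each of the ages $1,\dots,k$. This identification is valid because every pull of any arm $\tilde{i}\ge k$ traverses ages $1,\dots,k$ exactly once.

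Under this coupling, the event in \eqref{eq:confidence_bound} implies that, for the realized value $m=m_k(t)$, the fixed-$m$ average of the first $m$ stacked samples deviates by more than $\epsilon_m$. For each fixed $m$, that probability is at most $2K^{-1}T^{-2}$.

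Taking $m_k(t)$ as given reads the lemma as a per-$m$ statement, in the same way the paper uses it for each (arm, count) pair. If one instead wants it uniform over the random $m_k(t)$, a union bound over $m\le T$ and $k\le K$ costs extra factors. These are absorbed in the regret analysis, where the per-pair bound is summed over at most $KT$ pairs to give an $O(T^{-1})$ failure probability. I would state the per-fixed-$m$ version, which is exactly the displayed inequality, and note that the union bound is done later.
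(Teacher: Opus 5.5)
Your core computation is the paper's proof. The known term $C(k)/k$ cancels, $\bar{\mu}_k-\mu_k$ is the centered mean of $k\,m$ degradation costs in $[0,M]$, and Hoeffding with $\epsilon^2=M^2\log(\sqrt{K}T)/(k\,m)$ gives $2(\sqrt{K}T)^{-2}$. Your exponent $-2n\epsilon^2/M^2$ is the correct one; the paper's intermediate display $2\exp(-2k\,m_k(t)\,r_k^2(t))$ drops the $M^2$ but reaches the same bound. Reading the lemma as a statement for each fixed sample count $m$ is also right. That is what the paper's argument actually establishes, since it simply treats $m_k(t)$ as deterministic, and it is all that the later union bound over $(k,m)\in[K]\times[T]$ needs.

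The gap is in the coupling you propose to handle the random $m_k(t)$. The identification ``the $\ell$-th sample of arm $k$ is the $\ell$-th entry of each age-$a$ stack, $a\le k$'' is false for \textsf{BCAE}:
\begin{itemize}
\item The age stacks are also consumed by pulls of arms $\tilde{i}<k$. These pulls traverse ages $1,\dots,\tilde{i}$ but give arm $k$ no sample.
\item \textsf{BCAE} makes such pulls in every super-round. The top anchor $\hat{k}$ is pulled once while age $1$ is visited $2^{\hat{j}}$ times.
\item So after one super-round, the age-$1$ entry used by arm $\hat{k}$'s next sample is about $2^{\hat{j}}$ positions ahead of its age-$\hat{k}$ entry.
\end{itemize}
The stack indices are therefore history-dependent and differ across ages. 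The first $m$ samples of arm $k$ are not a fixed block of stack entries, so your fixed-$m$ Hoeffding step does not apply to them. The fact you cite, that each pull of $\tilde{i}\ge k$ visits ages $1,\dots,k$ once, is necessary but not sufficient.

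The repair is to attach the skeleton to arm $k$'s sampling events rather than to ages. For the fixed $k$, pre-draw i.i.d.\ vectors $Y_\ell=(Y_\ell(1),\dots,Y_\ell(k))$ with independent coordinates $Y_\ell(a)\sim f_a$. Let the $\ell$-th pull that contributes a sample to arm $k$ read its ages $1,\dots,k$ from $Y_\ell$, and draw every other degradation cost from fresh independent randomness. Whether a pull contributes to arm $k$ (i.e., $\tilde{i}\ge k$ and $k\in\mathcal{A}$) is decided before that pull's costs are observed, so this leaves the law of the whole process unchanged. Now $\bar{\mu}_k$ after $m$ samples equals $C(k)/k+\frac{1}{km}\sum_{\ell\le m}\sum_{a\le k}Y_\ell(a)$ exactly, and the rest of your argument goes through verbatim. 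An equivalent fix is to apply Azuma--Hoeffding to the centered per-slot costs along the contributing pulls, padding with fresh draws if fewer than $m$ samples occur.
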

\ifx \ISTR \undefined

\else
The details of the proof are in Appendix~\ref{app:thm:more_information}. 
\fi
The proof is based on the Hoeffding Inequality~\cite{Hoeffding1963Inequality} and the specific arm-cost composition of our problem. In analysis of the standard MAB, e.g.,~\cite{lattimore2020bandit}, the confidence gap, usually defined as $\sqrt{\frac{M^2\log (\sqrt{K}T)}{m_k(t)}}$, shrinks with the number of samples ${m_k(t)}$. Our result introduces a tighter confidence bound, which shrinks faster in the arm index.  

We denote $t_s$ as the beginning slot of the $s$-th super-round. 

\begin{lemma}\label{thm:comparable_c_i}
 After the previous super-round, we have that 
    \begin{equation}\label{eq:super-round}
        \frac{i\cdot m_i(t_s)}{j\cdot m_j(t_s)} \leq 2, \forall i,j\in\mathcal{A}(t_s).
    \end{equation}
    
\end{lemma}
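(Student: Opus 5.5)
The plan is to prove a per-super-round balance property and then aggregate it over super-rounds. Write $\Delta_i^{(s)}$ for the increment of $i\cdot m_i$ during super-round $s$. I first show that every arm $i$ surviving to the end of the super-round satisfies $2^{\hat{j}-1}<\Delta_i^{(s)}\le 2^{\hat{j}}$, where $\hat{j}=\lceil\log_2\hat{k}\rceil$ is fixed at the start of the round. Since $i\cdot m_i(t_s)$ is a sum of such increments, the mediant inequality $\frac{\sum_s a_s}{\sum_s b_s}\le\max_s\frac{a_s}{b_s}$ then gives the ratio bound $2$ in \eqref{eq:super-round}, because within each round $\Delta_i^{(s)}/\Delta_j^{(s)}<2$.

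\textbf{Increment in the no-delegation case.} Assume first that all anchors $2^{\hat{j}-\tilde{j}}$, $\tilde{j}=0,\dots,\hat{j}-1$, remain in $\mathcal{A}$. Take an arm $i$ with $2^{\hat{j}-\tilde{j}-1}<i\le 2^{\hat{j}-\tilde{j}}$. By the consecutive-feedback structure (Line~\ref{line:update_mu}), $i$ gets one sample from each pull of an anchor of index $\ge i$. These are exactly the anchors at levels $0,\dots,\tilde{j}$, so $m_i$ increases by
\[
1+\sum_{j=1}^{\tilde{j}}\left(2^{j}-2^{j-1}\right)=2^{\tilde{j}} .
\]
Hence $\Delta_i=i\cdot 2^{\tilde{j}}\in(2^{\hat{j}-1},2^{\hat{j}}]$. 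For $\tilde{j}=\hat{j}-1$ the lower end is arm $1$, which gets $\Delta_1=2^{\hat{j}-1}$ at the lower boundary; the ratio is then still at most $2$.

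\textbf{Delegation.} When anchor $2^{\hat{j}-\tilde{j}}$ has been eliminated, the pull goes to $\tilde{i}=\max\{i\in\mathcal{A}: i\le 2^{\hat{j}-\tilde{j}}\}$. Any surviving $i\le\tilde{i}$ receives exactly the same samples as before. Arms in $(\tilde{i},2^{\hat{j}-\tilde{j}}]$ are not in $\mathcal{A}$, so they do not matter. The danger is that a surviving arm $i$ could be the delegate for a deeper level $j''>\tilde{j}$ and so collect extra samples. That would require $i\le 2^{\hat{j}-j''}\le 2^{\hat{j}-\tilde{j}-1}$, which contradicts $i>2^{\hat{j}-\tilde{j}-1}$. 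So each surviving arm is counted exactly by the pulls of levels $0,\dots,\tilde{j}$, and the increment bound above still holds.

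\textbf{Main obstacles.} The main obstacle is the interaction with events outside one clean super-round. First, eliminations occur after every pull, not only at round boundaries, so the delegate for a level can change in the middle of its $\hat{z}$ pulls. I would handle this by arguing as above, pull by pull: a pull at level $\ell$ always lands on an arm $\le 2^{\hat{j}-\ell}$ and $\ge$ every surviving arm at levels $>\ell$, so surviving arms at deeper levels still receive exactly $2^{\tilde{j}}$ samples. Second, $\hat{j}$ may shrink between rounds; this is harmless for the mediant argument. Third, and most delicate, the initialization sets $m_k=1$ for all $k$, which contributes $i\cdot 1=i$. That contribution is unbalanced, with ratio up to $K$. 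I would treat the initial $K$-slot pull as a degenerate zeroth round and check whether the lemma should be read for increments accumulated from the first super-round onward, or whether the initial term is absorbed as a lower-order constant. If the literal statement requires it, I would either redefine the counts to exclude the initial sample or prove the weaker bound $\frac{i m_i}{j m_j}\le 2+o(1)$ once $\sum_s 2^{\hat{j}_s}\gg K$, which suffices for the regret analysis.
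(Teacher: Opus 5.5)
Your proposal is correct and follows the paper's route. The paper also computes the per-super-round increment $\delta_{m_i}=2^{\hat{j}-\lceil\log_2 i\rceil}$ from the pulls of index $\ge i$, bounds $i\,\delta_{m_i}/(j\,\delta_{m_j})\le 2$, and concludes; your treatment of delegation, mid-round eliminations, the mediant aggregation and arm $1$ is more careful (arm $1$ actually gets $2^{\hat{j}-1}$ samples, not the $2^{\hat{j}}$ given by the paper's formula).

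Your worry about the initial sample is well founded, since the paper's proof silently drops it and the literal bound fails: with $K=8$ and arms $1,8$ surviving $n$ super-rounds the ratio is $(8+8n)/(1+4n)>2$. So the constant $2$ holds only for the accumulated increments; for $s\ge 2$ one gets constant $4$, because after the first super-round every surviving arm has $j\,m_j\ge 2^{\hat{j}_1-1}\ge i/2$, and this only changes constants downstream.
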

The lemma implies that, for all arms in $\mathcal{A}(t_s)$, their confidence gaps, $\sqrt{\frac{M^2\log (\sqrt{K}T)}{i\cdot m_i(t_s)}},\forall i\in\mathcal{A}$, are tightly clustered, differing by no more than a multiplicative factor of $\sqrt{2}$. The lemma is due to our dedicated algorithmic design for balancing confidence gaps, which ensures that the increments of $i\cdot m_i$ in each super-round are close for all arms in the candidate set. \ifx \ISTR \undefined

\else
The details of the proof are in Appendix~\ref{app:thm:comparable_c_i}. 
\fi 

Suppose that the optimal arm is with index $k^*$, and we denote the difference between the expected cost of arm $k$ and arm $k^*$ as $\Delta_k=\mu_{k}-\mu_{k^*}$. We consider a clean event $\mathcal{E}$: 
\begin{equation}\label{eq:regret_alter_1}
    \mathcal{E}\triangleq 
    \left\{  |\bar{\mu}_k(t)-\mu_k|\leq \sqrt{\frac{M^2\log (\sqrt{K}T)}{k\cdot m_k(t)}},\forall t \in [T],k \in [K] \right\},
\end{equation}

We can show that event $\mathcal{E}$ happens with high probability based on Lemma~\ref{thm:more_information} and the union bound. It would be sufficient to show that our algorithm can obtain the desired regret bound under the clean event. We note that under event $\mathcal{E}$, the optimal arm $k^*$ will always be in $\mathcal{A}$, by observing that the condition in the algorithm \textsf{BCAE} to eliminate an arm from $\mathcal{A}$ will never hold for arm $k^*$.


We show that, under the clean event $\mathcal{E}$, algorithm \textsf{BCAE} only requires a bounded number of samples to eliminate an arm $i$ with optimality gap $\Delta_i$. 

\begin{lemma}\label{thm:bounded_sample}
   We have $m_i(T)\leq \frac{36M^2\log{(\sqrt{K}T)}}{i\cdot \Delta_i^2} + 2\cdot K, \forall i\neq k^*$.
\end{lemma}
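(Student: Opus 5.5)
We work throughout on the clean event $\mathcal{E}$, under which $k^*\in\mathcal{A}(t)$ for all $t$. Write $w_i(t)=\sqrt{M^2\log(\sqrt{K}T)/(i\, m_i(t))}$ for the confidence width. Fix a suboptimal arm $i$.

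\textbf{Step 1: a sufficient condition for elimination.} We first show that $i$ is eliminated at any check where $w_i\le \Delta_i/6$ and $w_{k^*}\le \Delta_i/3$. Since $j^*$ minimizes the upper confidence bound, on $\mathcal{E}$ we have
\[
\bar\mu_{j^*}+w_{j^*}\le \bar\mu_{k^*}+w_{k^*}\le \mu_{k^*}+2w_{k^*}.
\]
Also $\bar\mu_i-w_i\ge \mu_i-2w_i=\mu_{k^*}+\Delta_i-2w_i$. The elimination test on Line~\ref{line:thresholding_b} therefore fires once $\Delta_i>2w_i+2w_{k^*}$. This holds under the two width conditions above, since then $2w_i+2w_{k^*}\le \Delta_i/3+2\Delta_i/3=\Delta_i$; to get the strict inequality we take slightly smaller thresholds, which is harmless for the constants.

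\textbf{Step 2: reduce both widths to $i\,m_i$ via Lemma~\ref{thm:comparable_c_i}.} At every super-round start $t_s$ with $i\in\mathcal{A}(t_s)$, both $i$ and $k^*$ lie in $\mathcal{A}(t_s)$. Lemma~\ref{thm:comparable_c_i} then gives $k^* m_{k^*}(t_s)\ge \tfrac12 i\, m_i(t_s)$, hence $w_{k^*}(t_s)\le \sqrt2\, w_i(t_s)$. Both conditions of Step 1 thus follow from $w_i(t_s)\le \Delta_i/6$, that is, from
\[
i\, m_i(t_s)\ge \frac{36M^2\log(\sqrt{K}T)}{\Delta_i^2}.
\]
Here $\sqrt2\cdot\Delta_i/6\le\Delta_i/3$. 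The elimination check is performed after every pull, and in particular at the last pull of the super-round ending at $t_s$. So once this inequality holds at some $t_s$, arm $i$ is removed no later than that check, and $m_i$ never increases afterwards.

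\textbf{Step 3: bound the overshoot within the final super-round (main obstacle).} Let $s$ be the last super-round start with $i\in\mathcal{A}(t_s)$ at which the threshold fails. Then
\[
m_i(t_s)<\frac{36M^2\log(\sqrt{K}T)}{i\Delta_i^2},
\]
and it remains to bound how many samples $i$ collects during that super-round, before the next start (where, by Step 2, it is gone). In one super-round, arm $i$ receives a sample only from pulls of arms $\tilde i\ge i$, i.e., from anchors $2^{\hat j-\tilde j}\ge i$. The number of such pulls is $1+\sum_{j=1}^{\tilde j_i}(2^{j}-2^{j-1})=2^{\tilde j_i}$, where $2^{\hat j-\tilde j_i}$ is the smallest anchor $\ge i$. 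This gives $2^{\tilde j_i}\le 2^{\hat j}/i\le 2\hat k/i\le 2K$, and even $2K/i$. Adding these per-super-round samples to the bound on $m_i(t_s)$ yields
\[
m_i(T)\le \frac{36M^2\log(\sqrt{K}T)}{i\Delta_i^2}+2K.
\]
The delicate part is checking that delegation to $\tilde i=\max\{i'\in\mathcal{A}: i'\le 2^{\hat j-\tilde j}\}$ does not increase the count. We need to verify that delegated pulls still cover at most those scheduled for anchors $\ge i$, so the $2^{\tilde j_i}$ count still upper-bounds the samples. We also need to handle the initial phase (one sample from the first $K$ slots, absorbed into the $2K$). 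Lemma~\ref{thm:comparable_c_i} is used only at super-round boundaries, so no within-round balance is needed.
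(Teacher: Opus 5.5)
Your proof is correct and follows essentially the same route as the paper's. On $\mathcal{E}$, you use Lemma~\ref{thm:comparable_c_i} at super-round boundaries to get $w_{k^*}\le\sqrt{2}\,w_i$, so that $i\,m_i\ge 36M^2\log(\sqrt{K}T)/\Delta_i^2$ forces $\Delta_i>2w_i+2w_{k^*}$ and triggers elimination, and you then add the samples one super-round can supply. The differences are only bookkeeping. Your Step~2 should be read as a contradiction: invoking the balance lemma at $t_s$ requires $i\in\mathcal{A}(t_s)$, so the threshold must fail at every such boundary. You then charge a single overshoot of at most $2^{\hat{j}}/i<2K/i$. The paper instead takes the first boundary where the threshold is exceeded, applies the balance bound between the always-active $k^*$ and a possibly eliminated $i$, and pays $K$ twice.
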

\ifx \ISTR \undefined

\else
The details of the proof are in Appendix~\ref{app:thm:bounded_sample}. 
\fi
Unlike the standard bound, ours scales inversely with the arm index $i$ as well as $\Delta_i^2$, thanks to the tighter confidence bound (Lemma~\ref{thm:more_information}) and the balanced confidence gaps (Lemma~\ref{thm:comparable_c_i}). The idea of the proof is that after the required number of samples has been collected, our balancing idea ensures that the confidence bounds for the suboptimal arm $i$ and the optimal arm $k^*$ become sufficiently and consistently narrow, which triggers the elimination condition once the optimality gap $\Delta_i$ can be reliably distinguished. 


Our algorithm will pull arms with a smaller index to ensure balanced confidence gaps. Such behavior deviates from the maximum-index-pulling idea in the literature~\cite{Swapna2018Side,zhao2019,Li2021Waiting} and may not optimally utilize the consecutive feedback structure. For example, under the maximum-index-pulling idea, arms whose index values fall below that of the optimal arm are almost unlikely to be pulled since the optimal arm remains in the candidate set with high probability and will dominate the index comparison. As a comparison, our algorithm sometimes pulls these arms. Nevertheless, our novelty in the analysis is that we show that our algorithm is able to utilize the consecutive feedback efficiently (incurring a low regret) in each subset of a partition of the total $K$ arms. Specifically, the partition, induced by the anchor arms, consists of $\lceil\log_2 K\rceil$ subsets, which is denoted as $\{\mathcal{K}_{j}| j\in\lceil\log_2 K\rceil\}$, where $\mathcal{K}_{1}=\{1,2\}$ and $\mathcal{K}_{j}=\{2^{j-1}+1,\cdots,\min\{K,2^{j}\}\}$, for $j\in\{2,\cdots,\lceil\log_2 K\rceil\}$. We obtain the following upper bound on the regret incurred by pulling the arms in each subset.

\begin{lemma}\label{thm:subarms-regret}
    We have that, for any $j\in\{1,\cdots,\lceil\log_2 K\rceil\}$, 
    \begin{equation}\label{eq:subarms_regret}
    \sum_{i\in\mathcal{K}_j} i\cdot n_i(T)\cdot \Delta_i     
    \leq \sqrt{36TM^2\log (\sqrt{K}T)}\cdot L_1 + 2K^2(\bar{C}+M),
    \end{equation}
where $L_1\triangleq\left(2+\log\frac{2K(\bar{C}+M)\sqrt{T}}{\sqrt{36M^2\log (\sqrt{K}T)}}\right)$ and $\bar{C}\triangleq\max_k C(k)$.
\end{lemma}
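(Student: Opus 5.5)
Within the block $\mathcal{K}_j$, arm indices lie in $(2^{j-1},2^j]$, so they are within a factor of two of each other. The plan is to charge the regret of pulling arms in $\mathcal{K}_j$ to the samples collected for the \emph{smallest} not-yet-eliminated arm of the block, using the consecutive feedback. We work on the clean event $\mathcal{E}$; its complement contributes at most $T\cdot(\bar C+M)\cdot 2K\cdot T\cdot K^{-1}T^{-2}$ by Lemma~\ref{thm:more_information} and a union bound. This is $O(\bar C+M)$ and is absorbed in the additive term.

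\textbf{Step 1: structural observation.} Any pull of an arm $\tilde i\in\mathcal{K}_j$ yields one sample for every active arm $i\le\tilde i$, in particular for every active arm of $\mathcal{K}_j$ below $\tilde i$. Order the arms of $\mathcal{K}_j$ by the time they are eliminated (with $k^*$, if in the block, never eliminated). Consider a pull of arm $i\in\mathcal{K}_j$ at time $t$, and let $i_0(t)$ be the active arm of $\mathcal{K}_j$ with largest $\Delta$ at that time. We would rather charge by elimination order: let $\ell$ be the arm of $\mathcal{K}_j\cap\mathcal{A}(t)$ eliminated last among those $\le i$. That arm receives a sample from this pull, so the total number of pulls of arms of $\mathcal{K}_j$ made while some $\ell$ is active and below them is bounded by $m_\ell(T)$.

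\textbf{Step 2: sort and sum.} Rather than a delicate charging argument, I would use the standard ``sorted gaps'' trick. Relabel the arms of $\mathcal{K}_j$ so that their gaps are increasing. Every pull of an arm in $\mathcal{K}_j$ occurs while all arms of smaller gap remain active (under $\mathcal{E}$ arms with smaller $\Delta$ are eliminated no earlier, up to the factor absorbed in Lemma~\ref{thm:bounded_sample}). Moreover, the anchor/delegation rule only pulls the largest active index $\le 2^{\hat j-\tilde j}$, so the pulled arm is the largest active index in the block. Hence each pull supplies a sample to every active arm in the block, and the number of pulls $N$ made while arm $i$ is active and pulled from the block satisfies $N\le m_i(T)$. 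This gives
\[\sum_{i\in\mathcal{K}_j} n_i(T)\,\mathbf{1}[\cdot]\le \max_{\text{active}} m_i .\]
Writing $\sum_i i\,n_i\Delta_i\le 2^j\sum_i n_i\Delta_i$, and telescoping over elimination times in the usual peeling fashion, yields a bound of the form $\sum_{i} 2^j(\Delta_{(i)}-\Delta_{(i-1)})\, m_{(i)}(T)$.

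\textbf{Step 3: plug in Lemma~\ref{thm:bounded_sample} and threshold.} Lemma~\ref{thm:bounded_sample} gives $m_i\le \frac{36M^2\log(\sqrt K T)}{i\Delta_i^2}+2K$, and $i\ge 2^{j-1}$, so $2^j m_i\le \frac{72M^2\log}{\Delta_i^2}+2^{j+1}K$. Split the arms at a threshold $\epsilon$. Arms with $\Delta_i\le\epsilon$ contribute at most $T\epsilon$ total, since $\sum_i i\,n_i\le T$. For arms with $\Delta_i>\epsilon$, the telescoped sum becomes roughly $\int_\epsilon^{\bar C+M}\frac{c}{\Delta^2}\,d\Delta$-type expressions, which are $O(M^2\log/\epsilon)$. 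A naive per-arm bound would instead give a $\log$-ratio factor; the peeling over $\Delta$ from $\epsilon$ to $\bar C+M$ is exactly what yields $L_1=2+\log\frac{2K(\bar C+M)\sqrt T}{\sqrt{36M^2\log}}$. Choosing $\epsilon=\sqrt{36M^2\log(\sqrt K T)/T}$ balances $T\epsilon$ against the main term and produces the leading term $\sqrt{36TM^2\log}\cdot L_1$. The additive $2K\cdot 2^j\cdot(\bar C+M)\le 2K^2(\bar C+M)$ term comes from the $2K$ slack in Lemma~\ref{thm:bounded_sample} times the per-pull cost $i\Delta_i\le i(\bar C+M)$.

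\textbf{Main obstacle.} The hard step is Step 2: justifying that a pull of $i\in\mathcal{K}_j$ can be charged to samples of arms whose gap is small, given that delegation can pull non-maximal arms of the block. I would first try to prove that, within one block, the arm actually pulled is always the largest active index in $\mathcal{K}_j$. The reason is that anchors are powers of two and delegation picks the largest active index below the anchor. Granting this, every block pull samples all active block arms, and the peeling argument goes through cleanly.
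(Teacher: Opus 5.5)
Your route differs from the paper's, and its core is sound. In fact it gives a sharper bound. However, Step~2 as written relies on a false claim, and a few constants are off.

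\emph{Comparison.} The paper uses the same structural fact you isolate at the end. If $\tilde i\in\mathcal{K}_j$ is selected under the anchor $2^{\hat j-\tilde j}$, then $2^{j-1}<\tilde i\le 2^{\hat j-\tilde j}$ forces $2^j\le 2^{\hat j-\tilde j}$, so every index of $\mathcal{K}_j$ above $\tilde i$ is already eliminated. The paper turns this into an \emph{index-ordered} constraint: either $n_k(T)=0$ or $\sum_{i\ge k,\,i\in\mathcal{K}_j}n_i(T)\le \frac{36M^2\log(\sqrt KT)}{k\Delta_k^2}+2K$. It combines this with $\sum_i i\,n_i(T)\le T$, splits arms at $\Delta=\sqrt{36M^2\log(\sqrt KT)/T}$, and maximizes $\sum_i i\,n_i\Delta_i$ over the resulting LP relaxation. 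An exchange argument (Lemma~\ref{thm:optimal_conditions_b}) makes $i\Delta_i$ and $i\Delta_i^2$ increasing on the support. The greedy LP solution then telescopes into $\sum_i(x_i-x_{i-1})/x_i\le\log(x_{\max}/x_{\min})$ with $x_i=k_i\Delta_{k_i}$, and this is the source of the logarithm in $L_1$.

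You instead charge each block pull $t$ to the largest gap $D(t)$ among the active arms of $\mathcal{K}_j$, and sum over the layers $\{D(t)>x\}$. This avoids the LP and the exchange lemma altogether. With $c=36M^2\log(\sqrt KT)$ it gives $\epsilon T+4c/\epsilon+2K^2(\bar C+M)$, i.e.\ $4\sqrt{cT}+2K^2(\bar C+M)$ for $\epsilon=2\sqrt{c/T}$. There is no logarithmic factor, so this implies the lemma whenever $L_1\ge 4$, and it would replace $L_1$ by a constant in Theorem~\ref{thm:ae_independent}.

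\emph{What to fix.}

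(i) The claim that on $\mathcal{E}$ arms with smaller $\Delta$ are eliminated no earlier is false. On $\mathcal{E}$, eliminating arm $a$ only implies $\Delta_a>0$, so favorable noise can remove a small-gap arm while a larger-gap arm survives. Hence your telescoped sum with the realized counts $m_{(i)}(T)$ is unjustified. No ordering is needed, though. Active sets only shrink, so the block pulls made while some arm of gap $>x$ is active are exactly those made while the last-eliminated such arm $a_x$ is active. Each of these pulls samples $a_x$, because the pulled arm is the largest active index of the block. Hence $\#\{t: D(t)>x\}\le m_{a_x}(T)\le \frac{c}{2^{j-1}x^2}+2K$ by Lemma~\ref{thm:bounded_sample}. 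This bound is monotone in $x$, so the layer-cake integral over $x\in(\epsilon,\bar C+M]$ goes through.

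(ii) Your sentence that the peeling ``is exactly what yields $L_1$'' contradicts your own computation. The integral $\int_\epsilon^{\bar C+M}\Delta^{-2}\,d\Delta\le 1/\epsilon$ has no logarithm; the log in $L_1$ is an artifact of the paper's index-ordered LP.

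(iii) The inequality $2K\cdot 2^j\le 2K^2$ fails, since $2^j$ can be nearly $2K$. Multiply by the block's largest index $\min\{K,2^j\}$ rather than $2^j$, and use $\min\{K,2^j\}/a\le 2$ for $a\in\mathcal{K}_j$. After this the additive term is exactly $2K^2(\bar C+M)$. Your bad-event term $2(\bar C+M)$ then has no room there; put it into the slack $L_1-4$, or, as the paper implicitly does, state the bound on $\mathcal{E}$.
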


\ifx \ISTR \undefined

\else
The details of the proof are in Appendix~\ref{app:thm:subarms-regret}. 
\fi
 Essentially, the lemma implies that the regret incurred by pulling suboptimal arms in each subset $\mathcal{K}_j$ is bounded by $\tilde{O}(\sqrt{T})$.  Together with the fact that the number of subsets in our partition is bounded by $\log_2 K+1$, we can conclude the regret upper bound of our algorithm, which will be shown in Theorem~\ref{thm:ae_independent}.

The lemma is based on the tighter bounds on the required number of samples shown in Lemma~\ref{thm:bounded_sample}. We further show that in each subset, an arm will not be pulled before all arms with larger indices (in the subset) are eliminated. It ensures the following inequality under the clean event $\mathcal{E}$,  $\forall k\in \mathcal{K}_{j}$,
\begin{equation}\label{eq:condition_number_of_pulls_app_b_m}
   n_k(T)=0\; \text{or}\; \textstyle\sum_{i=k}^{\min\{K, 2^j\}} n_i(T) \leq \frac{36M^2\log (\sqrt{K}T) }{k\cdot \Delta_k^2} + 2 K.
\end{equation}
That is, either that an arm $k$ will never be pulled (it is eliminated due to sufficient samples from pulling arms with larger indices) or that its sample bound provides an upper bound to the total number of pulls of all arms with no smaller indices (in the subset). The bound is tighter than the one in the standard MAB, where the number of pulls of each arm is bounded by its required sample size individually.  
The proof of the lemma is based on solving the optimization problem of maximizing the incurred regret subject to the constraints on the number of pulls of the arms (including~\eqref{eq:condition_number_of_pulls_app_b_m}). 

We now show the regret guarantee of our proposed algorithm in the following theorem, which is obtained by summing up the regret incurred by all the $\lceil\log_2 K\rceil$  subsets and Lemma~\ref{thm:subarms-regret}.

\begin{theorem}\label{thm:ae_independent}
    The regret of our \textsf{BCAE} algorithm is upper bounded by $
    \sqrt{36TM^2\log (\sqrt{K}T)}\cdot (\log_2 K +1)\cdot L_1 + 2K^2(\log_2 K +1)(\bar{C}+M),$ where $L_1$ and $\bar{C}$ are as those defined in Lemma~\ref{thm:subarms-regret}.
\end{theorem}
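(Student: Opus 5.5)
We bound the surrogate regret $\tilde{R}(T)$ and transfer the result to $R(T)$ via Lemma~\ref{thm:regret-redefination}. The additive $K\cdot M$ from that lemma, and the cost of the initialization pull of length $K$, are absorbed into the $2K^2(\log_2 K+1)(\bar{C}+M)$ term. We split the expectation on the clean event $\mathcal{E}$ in~\eqref{eq:regret_alter_1} and its complement.

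\textbf{Complement of $\mathcal{E}$.} Lemma~\ref{thm:more_information} gives a failure probability of at most $2K^{-1}T^{-2}$ per pair $(t,k)$. A union bound over $t\in[T]$ and $k\in[K]$ then gives $\mathbf{P}(\mathcal{E}^c)\le 2/T$. The per-slot cost of any interval is at most $\bar{C}+M$, so the total cost is at most $T(\bar{C}+M)$. The contribution of $\mathcal{E}^c$ is therefore at most $2(\bar{C}+M)$, which is again dominated by the additive term.

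\textbf{On $\mathcal{E}$.} We rewrite the surrogate regret using $\sum_k k\,n_k(T)\le T$:
\[
\tilde{R}(T)=\sum_k k\,\mu_k n_k(T)-T\mu_{k^*}\le \sum_k k\,n_k(T)\Delta_k .
\]
There is one subtlety here. If $\sum_k k\,n_k(T)<T$ because of the final truncated pull, the leftover partial interval costs at most $K(\bar{C}+M)$, and it is handled exactly as in the rounding argument of Lemma~\ref{thm:regret-redefination}. We then group arms according to the partition $\{\mathcal{K}_j\}_{j=1}^{\lceil\log_2 K\rceil}$ induced by the anchors:
\[
\sum_k k\,n_k(T)\Delta_k=\sum_{j=1}^{\lceil \log_2 K\rceil}\ \sum_{i\in\mathcal{K}_j} i\,n_i(T)\Delta_i .
\]
Applying Lemma~\ref{thm:subarms-regret} to each inner sum and using $\lceil\log_2 K\rceil\le \log_2 K+1$ gives
\[
\sqrt{36TM^2\log(\sqrt{K}T)}\,(\log_2 K+1)\,L_1+2K^2(\log_2K+1)(\bar{C}+M).
\]
The expected $n_i(T)$ is handled by noting that Lemma~\ref{thm:subarms-regret} holds pathwise on $\mathcal{E}$.

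\textbf{Main obstacle.} The per-subset bound is already provided by Lemma~\ref{thm:subarms-regret}, so the delicate step is the bookkeeping of the additive constants. The $\mathcal{E}^c$ contribution, the initialization cost, the truncated final pull, and the $K M$ gap from Lemma~\ref{thm:regret-redefination} must all fit within the stated $2K^2(\log_2K+1)(\bar{C}+M)$ term. My first attempt is to note that Lemma~\ref{thm:subarms-regret}'s additive $2K^2(\bar{C}+M)$ summed over subsets has slack of order $K^2$ per subset. Since each of these extra terms is only $O(K(\bar{C}+M))$, they can be absorbed, which works for $K\ge 2$. If this slack proves insufficient, the theorem can be read up to an additive $O(K(\bar{C}+M))$ constant.
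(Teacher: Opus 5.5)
Your argument is correct and matches the paper's: on $\mathcal{E}$, sum Lemma~\ref{thm:subarms-regret} over the $\lceil\log_2 K\rceil$ subsets $\mathcal{K}_j$ and use $\lceil\log_2 K\rceil\le\log_2 K+1$; the paper simply does not track the $\mathcal{E}^c$, rounding and initialization terms that you add. One correction to that bookkeeping: summing the lemma leaves no ``slack of order $K^2$ per subset''---the only room is the fraction $\log_2K+1-\lceil\log_2K\rceil\ge\log_2\frac{K}{K-1}\ge\frac{1}{K\ln 2}$ of one copy of the per-subset bound, i.e.\ at least $\frac{2K}{\ln 2}(\bar{C}+M)$ in the additive part, which still covers the genuine extras, namely $KM$ from Lemma~\ref{thm:regret-redefination} (this already includes the truncated last pull, so do not count it twice) and $2(\bar{C}+M)$ from $\mathcal{E}^c$, since the initialization pull is an occurrence of interval $K$ and is therefore already part of $n_K(T)$ in \eqref{eq:regret_alter}.
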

 



    

Theorem~\ref{thm:ae_independent} shows that our \textsf{BCAE} algorithm achieves a sublinear regret upper bound of $\tilde{O}(\sqrt{T})$ for sufficiently large $T$. Here, the $\tilde{O}$ notation suppresses logarithmic factors. We note that sublinear regret implies that the long-term average cost of our proposed algorithm converges to the same value as the optimal solution. Our approach could effectively learn to make optimal update decisions. In addition, it improves over simply applying standard MAB algorithms (achieving $\tilde{O}(K\sqrt{T})$, Proposition~\ref{thm:structure_lower_bound}). Finally, with the lower bound identified in the following Proposition~\ref{thm:lower_bound}, our algorithm achieves a regret bound close to the lower bound by up to logarithmic factors. 



\begin{proposition}\label{thm:lower_bound}
    The regret for any online algorithm for \textbf{USP} is lower bounded by $\Omega(\sqrt{T})$.
\end{proposition}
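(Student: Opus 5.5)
We reduce to a two-point hypothesis-testing argument, in the style of the classical $\Omega(\sqrt{T})$ lower bound for two-armed stochastic bandits. It suffices to show that for every online algorithm there exists an instance of \textsf{USP} (with some fixed $K\ge 2$) whose expected regret is $\Omega(\sqrt{T})$. By Lemma~\ref{thm:regret-redefination}, $R(T)$ and $\tilde{R}(T)$ differ by at most $KM$. We may therefore work with $\tilde{R}(T)=\sum_k k\,\Delta_k\, n_k(T)$ plus a constant.

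\textbf{Construction.} Take $K=2$, $C(1)=C(2)=c$, and let $g(1)$ be deterministic (say $0$). Let $g(2)\sim \mathrm{Bernoulli}(p)$, scaled by $M$. Then $\mu_1=c$ and $\mu_2=(Mp+c)/2$. Choose $p_0$ so that $\mu_1=\mu_2$ exactly, i.e., $Mp_0=c$, taking $c=M/2$ so that $p_0=1/2$. Define two instances $p_\pm=1/2\pm\epsilon$ with $\epsilon=1/\sqrt{T}$.
\begin{itemize}
\item Under $p_+$, arm $1$ is optimal and $\Delta_2=M\epsilon/2$.
\item Under $p_-$, arm $2$ is optimal and $\Delta_1=M\epsilon/2$.
\end{itemize}
The key structural point is that the only informative observation is $g(2)$. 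It is revealed only in slots of age $2$, i.e., only when arm $2$ is pulled. Consecutive feedback does not help here, since arm $1$'s feedback carries no information about $p$.

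\textbf{Information argument.} Let $N_2$ be the number of arm-2 pulls, so $\mathbb{E}[N_2]\le T/2$. By the divergence decomposition, $\mathrm{KL}(\mathbb{P}_+\|\mathbb{P}_-)=\mathbb{E}_+[N_2]\cdot \mathrm{kl}(p_+,p_-)\le (T/2)\cdot O(\epsilon^2)=O(1)$. Bretagnolle--Huber then gives
\[
\mathbb{P}_+(N_2\ge T/4)+\mathbb{P}_-(N_2< T/4)\ge \tfrac12 e^{-O(1)}.
\]
We lower-bound the regret on each bad event.
\begin{itemize}
\item Under $p_+$ on $\{N_2\ge T/4\}$, the regret is at least $2\cdot (T/4)\cdot M\epsilon/2=\Omega(\sqrt{T})$.
\item Under $p_-$ on $\{N_2<T/4\}$, at least $T-2N_2\ge T/2$ slots are spent in arm-1 pulls, giving regret at least $(T/2)\cdot M\epsilon/2=\Omega(\sqrt{T})$.
\end{itemize}
Hence $\max_\pm \mathbb{E}[\tilde R(T)]=\Omega(\sqrt T)$, and subtracting $KM=O(1)$ yields the claim.

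\textbf{Main obstacle.} The delicate step is the bookkeeping that converts pull counts into slot counts under the timing structure. The event should be phrased in terms of the time fraction $2N_2/T$ rather than the raw number of pulls. We also need to ensure the algorithm's adaptive, variable-length decisions still fit the divergence decomposition. Conditioning on the history, each arm-2 pull contributes exactly one Bernoulli observation with KL $\mathrm{kl}(p_+,p_-)$, and arm-1 pulls contribute zero, which handles this. Finally, the truncated last interval near $T$ is absorbed into the $KM$ slack.
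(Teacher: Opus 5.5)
Your proposal is correct and follows essentially the same route as the paper: a two-instance construction with $K=2$, equal update costs, and $g(2)\sim\mathrm{Bernoulli}(1/2\pm\epsilon)$, so the per-slot gap is $\Theta(\epsilon)$, followed by the standard two-armed information-theoretic argument with $\epsilon=\Theta(1/\sqrt{T})$. The differences are cosmetic: you take $g(1)\equiv 0$ and $C=M/2$ where the paper takes $g(1)\sim\mathrm{Bernoulli}(1/4)$ and $C=1/4$, and you write out the divergence-decomposition and Bretagnolle--Huber steps that the paper delegates to the textbook two-armed lower bound.
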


We prove the lower bound following the proof of the lower bound for stochastic multi-armed bandit with two arms; see Theorem 2.10 with $K=2$ in~\cite{slivkins2019introduction} for an illustration. We construct two instances of~\textsf{USP}, where $g_1$ follows a Bernoulli distribution with mean $1/4$, $C(1)=C(2)=1/4$, and $g_2$ follows a Bernoulli distribution with mean $1/2-\epsilon$ in the first instance and with mean $1/2+\epsilon$ in the second case. Under both instances, we can check that playing a suboptimal arm will incur a loss of $1/2\cdot \epsilon$ per round. Following the information-theoretic argument in~\cite{slivkins2019introduction}, it requires $\Omega({1}/{\epsilon^2})$ rounds to determine the optimal arms in both instances. It will incur a regret of $\Omega(\epsilon\cdot \min\{{1}/{\epsilon^2},T\})$, which by tuning $\epsilon=\Theta(1/\sqrt{T})$, leads to a regret lower bound of $\Omega(\sqrt{T}$).

\subsection{Optimism-Enhanced Balanced Consecutive Arm Elimination (\textsf{OE-BCAE})}\label{sec:OE-BCAE}

\begin{figure*}[!ht]
\subcaptionbox{Performance across the horizon $T$  \label{fig:T}}%
{\includegraphics[width=0.32\linewidth]{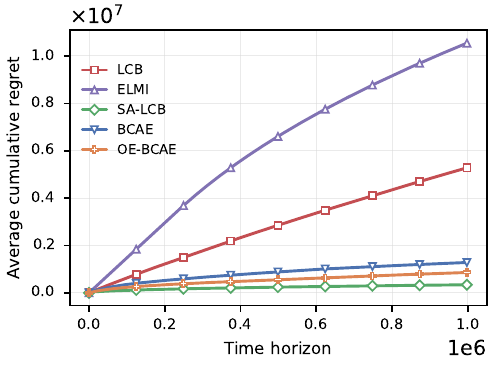}}
  \hspace{\fill}
  \subcaptionbox{Performance under different $K$\label{fig:K}}%
{\includegraphics[width=0.32\linewidth]{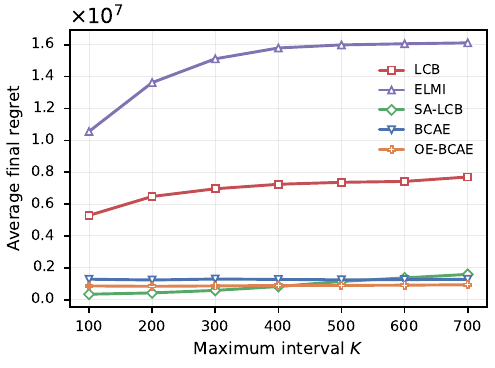}}
     \hspace{\fill} 
    \subcaptionbox{Performance under different $C_0$\label{fig:C}}%
{\includegraphics[width=0.32\linewidth]{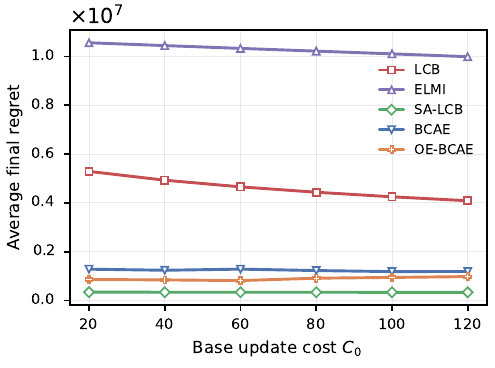}}
\caption{The performance of comparing algorithms under different parameters.}
  \label{fig:sample}
\vspace{-3mm}
\end{figure*}

 
Our proposed \textsf{BCAE} algorithm follows the arm elimination paradigm. It permanently discards an arm only when there is high confidence that the arm is suboptimal. Because mistaken elimination is irrecoverable, the criterion is necessarily conservative, so the algorithm adapts only at infrequent elimination events. In contrast, the Lower Confidence Bound (LCB) paradigm reselects the arm with the smallest LCB every round, making it more responsive to incoming feedback. Such a behavior of arm elimination may not be preferable from an empirical evaluation perspective, as observed in the literature, e.g., ~\cite{zhao2019,Swapna2018Side}. In this section, we incorporate the optimism idea in the lower confidence bound paradigm and propose Optimism-Enhanced Balanced Consecutive Arm Elimination (\textsf{OE-BCAE}). We show that \textsf{OE-BCAE} maintains the near-optimal regret bound. We will discuss numerical performance in Sec.~\ref{sec:simulation}.\footnote{A direct LCB-based design remains open (see \textsf{SA-LCB} in Sec.~\ref{sec:simulation}), as it is unclear how LCB variants can efficiently utilize the consecutive feedback structure; see e.g.~\cite{zhao2019,Swapna2018Side}.}


Our \textsf{OE-BCAE} algorithm makes the following modification to \textsf{BCAE}. At Step~\ref{line:arm_pulling_b_b} of Algorithm~\ref{alg:ae_independent}, instead of simply pulling the arm with the maximum index no larger than $2^{\hat{j}-\tilde{j}}$, we apply the lower confidence bound algorithm in a forward window from the arm. More specifically, for each $\tilde{i}$, we consider the forward window, 
\begin{equation}\label{eq:defn:window}
    \mathcal{W}_{\tilde{i}}\triangleq \{\tilde{i}\} \cup \left(\mathcal{A} \cap  [l+1, l+w]\cap [l, 2\cdot l]\right) ,
\end{equation}
where $l \triangleq 2^{\lceil \log_2 {\tilde{i}}\rceil}$ and $w$ is an upper bound on the window size that can be designed. And, we pull the arm
\begin{equation}\label{eq:LCB_in}
    \hat{i} = {\arg\min}_{i\in \mathcal{W}_{\tilde{i}}} \; \bar{\mu}_{i}-\sqrt{\frac{M^2\log (\sqrt{K}T)}{i\cdot m_i}}.
\end{equation}

The use of the forward window ($\hat{i}$ is no smaller than $ \tilde{i}$) allows us to efficiently utilize consecutive feedback, similar to pulling $\tilde{i}$: we receive all the feedback from arm one to arm $\tilde{i}$ at once. Restricting the arm indexes in the window to $2\cdot l$ can keep the confidence bounds among arms balanced. These two maintain the two properties of \textsf{BCAE} that are important for achieving the near-optimal regret guarantee. Further, by limiting the number of arms in the forward window, $w$, we can limit the additional regret or risk incurred compared with directly pulling arm $\tilde{i}$. All together, these allow us to show that \textsf{OE-BCAE} also achieves the near-optimal regret $\tilde{O}(\sqrt{T})$.

\begin{theorem}\label{thm:ae_independent_oe}
    The regret of our \textsf{OE-BCAE} algorithm is upper bounded by $\tilde{O}(\sqrt{T})$ when $w$ is a constant or ${O}(\log{K})$.  
\end{theorem}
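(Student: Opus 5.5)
The plan is to follow the structure of the \textsf{BCAE} analysis and verify that each ingredient survives the LCB-in-window modification. The clean event $\mathcal{E}$ depends only on the samples, and Lemma~\ref{thm:more_information} with the union bound still gives $\mathbf{P}(\mathcal{E}^c)\le 2/T$. So we work under $\mathcal{E}$, where the elimination rule is unchanged and $k^*$ is never eliminated.

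\textbf{Step 1: balance.} First I would re-establish a version of Lemma~\ref{thm:comparable_c_i}, allowing the constant $2$ to become a slightly larger constant (say $4$). Under \textsf{OE-BCAE}, the pulled arm $\hat i\in\mathcal{W}_{\tilde i}$ satisfies $\tilde i\le \hat i\le 2l$ with $l=2^{\lceil\log_2\tilde i\rceil}$. Hence every arm that \textsf{BCAE} would have sampled in that pull (all $i\le\tilde i$) is still sampled, so the per-super-round sample counts dominate those of \textsf{BCAE}. The only arms that may receive extra samples are those in $(\tilde i,2l]$, all of which lie in the same dyadic block as, or the block just above, the anchor. I would show that this changes each increment of $i\cdot m_i$ by at most a constant factor. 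Lemma~\ref{thm:bounded_sample} then goes through with $36$ replaced by a larger constant, because its proof only uses the balance ratio together with $\mathcal{E}$.

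\textbf{Step 2: per-block regret.} Next I would redo Lemma~\ref{thm:subarms-regret}. The pulls of \textsf{OE-BCAE} split into pulls of $\tilde i$ itself and pulls of some $\hat i>\tilde i$ inside the window. The key structural fact used for \eqref{eq:condition_number_of_pulls_app_b_m} is that arm $k$ in a block receives a sample whenever any arm of index $\ge k$ in that block is pulled. This still holds. It yields $\sum_{i\ge k,\, i\in\mathcal{K}_j} n_i(T)\le \frac{c M^2\log(\sqrt KT)}{k\Delta_k^2}+2K$, \emph{or} $n_k(T)=0$. The latter alternative is what can fail: arm $k$ may now be pulled by LCB while larger arms in its block are still alive. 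I would therefore replace the "either/or" by a bound on the total number of pulls of block-$j$ arms at or above $k$. That total is at most the number of samples $k$ can collect before its elimination, which Lemma~\ref{thm:bounded_sample} controls. The same optimization argument (maximize $\sum i n_i\Delta_i$ subject to $\sum i n_i\le T$ and these nested constraints) then gives $\tilde O(\sqrt T)$ per block.

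\textbf{Step 3: window overflow, the main obstacle.} The hard part is the window arms $\hat i\in[l+1,l+w]$ that lie in the \emph{next} block $\mathcal{K}_{j+1}$ and get pulled "on behalf of" anchor $\tilde i$. Their pulls are not controlled by the block-$(j+1)$ counting, so they must be charged separately. My plan is to use the LCB rule under $\mathcal{E}$. A pull of $\hat i$ implies $\mathrm{LCB}_{\hat i}\le \mathrm{LCB}_{\tilde i}\le \mu_{\tilde i}$ and $\mathrm{LCB}_{\hat i}\ge \mu_{\hat i}-2\,\mathrm{rad}_{\hat i}$. This gives $\Delta_{\hat i}\le \Delta_{\tilde i}+2\,\mathrm{rad}_{\hat i}$, so the extra regret per pull is at most the radius. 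Summing $\hat i\cdot\mathrm{rad}_{\hat i}$ over pulls, as in the standard UCB analysis, gives $O(\sqrt{\hat i\, n_{\hat i}\log(\sqrt KT)})$ per window arm. Cauchy--Schwarz over at most $w$ arms per anchor and $\log_2 K$ anchors then bounds this by $O(\sqrt{wT\log K\log(\sqrt KT)})$. This is $\tilde O(\sqrt T)$ when $w=O(\log K)$. Summing over the $\lceil\log_2K\rceil$ blocks and adding $K\cdot M$ from Lemma~\ref{thm:regret-redefination} plus $T\cdot(\bar C+M)\cdot\mathbf{P}(\mathcal{E}^c)$ concludes the proof. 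If the radius-charging argument fails to combine cleanly with the elimination counting, the fallback is the cruder approach: bound the window arms' regret by Lemma~\ref{thm:bounded_sample} applied to each of the at most $w$ overflow arms individually, at the cost of an extra factor $w$.
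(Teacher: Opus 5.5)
\textbf{The gap is in your Step 2.} In the \textsf{BCAE} analysis, the constraint $\sum_{i\ge k,\,i\in\mathcal{K}_j} n_i(T)\le m_k(T)$ holds only because of the disjunction. A pull of a larger arm samples arm $k$ only while $k\in\mathcal{A}$. The condition $n_k(T)>0$ guarantees that every pull of a larger arm of the block happened before $k$'s first pull, hence while $k$ was alive.

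Your replacement bounds the total number of pulls of in-block arms at or above $k$ by the samples $k$ collects before its elimination. This is false once $n_i$ counts actual pulls. A window arm $k\in[l+1,l+w]$ can be pulled by the LCB rule on behalf of a delegate from the block below and be eliminated soon after. Meanwhile a near-optimal arm at the top of its block keeps being pulled for the rest of the horizon. The left side then grows linearly in $T$, whereas $m_k(T)=O\bigl(\log(\sqrt{K}T)/(k\Delta_k^2)+K\bigr)$. Without that constraint, the nested-constraint optimization behind Lemma~\ref{thm:subarms-regret} cannot be run for that block.

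Two related problems follow. First, Steps 2 and 3 book the window pulls twice: Step 2 places them in $\mathcal{K}_{j+1}$, and Step 3 charges them separately. Second, ``the extra regret per pull is at most the radius'' is not accurate. Your LCB inequality gives $\hat i\Delta_{\hat i}\le \hat i\Delta_{\tilde i}+2\hat i\,\mathrm{rad}_{\hat i}$. The base term $\hat i\Delta_{\tilde i}$ can be as large as $4\tilde i\Delta_{\tilde i}$ (since $\hat i\le 2l<4\tilde i$), and it must be charged somewhere.

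\textbf{The missing idea is the paper's bookkeeping.} Let $n_i$ count how often $i$ is selected as the delegate $\tilde i$ at Line~\ref{line:arm_pulling_b_b}, not how often it is pulled. Delegate selection follows exactly \textsf{BCAE}'s rule, so the either/or constraint holds verbatim for these counts: each selection of some $\tilde i\ge k$ produces a pull of some $\hat i\ge\tilde i$, which samples $k$ if $k$ is alive. The bound $\sum_i i\,n_i\le T$ also still holds.

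Each pull then costs at most $4\tilde i\Delta_{\tilde i}+2\hat i\,\mathrm{rad}_{\hat i}(t)$.
\begin{itemize}
\item The first part sums to the paper's $R_{II}=4\sum_i n_i\, i\,\Delta_i$. It is handled by Lemma~\ref{thm:subarms-regret} together with the factor-$4$ balance from your Step 1.
\item The second part is exactly what your Step 3 bounds. You sum the radius over all LCB pulls of each window arm, whose samples are shared by every delegate of the block below, and apply Cauchy--Schwarz over the at most $w\lceil\log_2 K\rceil$ window arms. This is sound.
\end{itemize}
Your Step 3 is in fact cleaner than the paper's Part~I, which bounds the LCB excess delegate by delegate. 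Once the weight $4i$ from its step (a) is kept, the paper's sum is $\sum_i\sqrt{i\,n_i\,w\log(\sqrt{K}T)}$. Cauchy--Schwarz controls that only through the number of distinct delegates.

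Your fallback also closes the argument, provided two changes are made. First, Step 2 counts only pulls with $\hat i=\tilde i$; for these the either/or does hold. Second, let $N_k$ be the number of LCB pulls of window arm $k$. Using Lemma~\ref{thm:bounded_sample} and $kN_k\le T$, the regret of those pulls is at most $\min\{T\Delta_k,\,cM^2\log(\sqrt{K}T)/\Delta_k\}+2K^2(\bar{C}+M)\le\sqrt{cTM^2\log(\sqrt{K}T)}+2K^2(\bar{C}+M)$.
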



\ifx \ISTR \undefined

\else
The details of the proof are in Appendix~\ref{app:thm:ae_independent_oe}. 
\fi
Our idea to prove the theorem is twofold. First, we decompose the regret into two parts, where the first part is induced as if we play $\tilde{i}$, and the second part is the additional regret from actually pulling $\hat{i}$ according to~\eqref{eq:LCB_in}. Second, we show the first part incurs a regret bounded by $\tilde{O}(\sqrt{T})$ following a similar analysis as that of the \textsf{BCAE} algorithm, and the second part functions analogously to LCB with a constant number of arms, contributes an additional regret term bounded by $\tilde{O}(\sqrt{T})$.


\section{Simulation Results}\label{sec:simulation}
In this section, we evaluate the performance of our proposed online learning algorithms through numerical simulations. Our evaluation aims to demonstrate the efficiency of our proposed algorithms and compare them with the alternatives. \ifx \ISTR \undefined
An additional empirical evaluation on a remote monitoring instance using real-world traces is provided in~\cite{TechnicalReport} due to space limitations.
\else
An additional empirical evaluation on a remote monitoring instance using real-world traces is provided in Sec.~\ref{sec:empirical}.
\fi  

\subsection{Simulation Setup}\label{sec:num_setup}

\textbf{Baselines for Comparison:} We compare our proposed algorithms against the following baselines:
\begin{itemize}
    \item \textsf{LCB:} The standard Lower Confidence Bound algorithm. 
    \item \textsf{ELMI}, e.g.,~\cite{Swapna2018Side,zhao2019,Li2021Waiting}: An arm-elimination variant that always pulls the maximum-index arm in the candidate set.
    \item \textsf{\textsf{SA-LCB}}, Structure-Aware LCB: An LCB variant we adapt regarding the problem structures, i.e., replacing the confidence bound used in LCB with the one we develop in Lemma~\ref{thm:more_information}. 
    \item \textsf{\textsf{BCAE}:} Our proposed Balanced Consecutive Arm Elimination algorithm in Algorithm~\ref{alg:ae_independent}.
    \item \textsf{\textsf{OE-BCAE}:} Our Optimism-Enhanced variant, which integrates LCB principles into \textsf{BCAE}, discussed in Sec.~\ref{sec:OE-BCAE}.
\end{itemize}

\textbf{Environment and Cost Models:} We simulate a dynamic system where an operator maintains a deployed model over a time horizon \(T\). The decision space consists of \(K\) candidate update intervals (arms).  The expected degradation cost is modeled as an increasing function of the model age \(a\), specifically \(\tilde{g}(a) = 50\cdot (a/K)^2\). The realized degradation cost at slot \(t\) is sampled from a truncated normal distribution with mean \(\tilde{g}(a_t)\) and a random standard deviation \(\sigma = \alpha \cdot \tilde{g}(a)\) where $\alpha$ is chosen as $0.2$. 
The deterministic system update cost is set to be \(C(a) = C_0 + 50\sqrt{a/K}\). The sublinear square root function captures diminishing marginal overhead for updating. {We set the degradation and update costs to be parameterized by the normalized age $a/K$. It ensures that increasing $K$ increases the number of available arms without proportionally enlarging the cost range, capturing a setting with more candidate intervals but comparable cost scale.} 

By default, we set the time horizon to \(T = 10^6\) slots, the maximum candidate interval to \(K = 100\), and the base update cost \(C_0 = 20\). Results are averaged over 30 independent simulation runs to ensure statistical confidence.


\subsection{Results and Discussion}

Fig.~\ref{fig:T} illustrates the cumulative regret of the evaluated algorithms under the default settings. In Fig.~\ref{fig:K}, we demonstrate the regret of the algorithms under different maximum intervals $K$. Fig.~\ref{fig:C} shows the regret of the algorithms under different base update costs $C_0$.

The results show that \textsf{BCAE} and \textsf{OE-BCAE} clearly improve over \textsf{LCB} and \textsf{ELMI}, which do not or only partially exploit the favorable problem structures, and remain robust across varying maximum intervals and base update costs, highlighting their scalability and adaptability. Compared with \textsf{SA-LCB}, a baseline we adapt from the standard LCB by incorporating the problem structures, \textsf{SA-LCB} achieves the lowest regret when the number of arms is small, whereas our algorithms, particularly \textsf{OE-BCAE}, are superior in larger search spaces. Although \textsf{SA-LCB}'s competitive performance underscores the value of the structural insights in this paper, its theoretical regret analysis remains an open challenge.

The plots also highlight the empirical advantage of \textsf{OE-BCAE} over \textsf{BCAE}: by integrating LCB-style optimism, \textsf{OE-BCAE} avoids the delayed adaptation inherent to pure elimination paradigms, resulting in a lower-regret curve.

\Qiulin{\textit{Robustness to temporal correlation.} We relax the independence
assumption (Remark~\ref{remark:assumptions}) by making the degradation noise
AR(1): $g_t(a_t)=\max(0,\tilde{g}(a_t)+\sigma(a_t)z_t)$ with
$z_t=\rho z_{t-1}+\sqrt{1-\rho^2}\,\eta_t$ ($\eta_t$ i.i.d.\ standard normal),
leaving the per-age mean and variance unchanged, so $\rho=0$ recovers the
i.i.d.\ model. Table~\ref{tab:rho} shows \textsf{OE-BCAE}'s mean regret is flat
across $\rho\in\{0,\dots,0.9\}$. Other baselines behave
likewise, but details are omitted due to space limit. Intuitively, correlation inflates the variance of the estimates without
shifting their means, while the confidence bounds of
Lemma~\ref{thm:more_information} are variance-free and far larger than the estimation noise. Hence, the arm-selection decisions and the mean regret barely change. We note that under correlation, the realized degradation predicts future
degradation, so an optimal policy should adapt to the realized degradation. Designing such policies is left to future
work.
}

\Qiulin{
\begin{table}[!t]
\caption{Average regret of \textsf{OE-BCAE}  versus the
autocorrelation coefficient $\rho$. }
\label{tab:rho}
\centering
\begin{tabular}{lrrrrrr}
\toprule
$\rho$ & 0 & 0.3 & 0.5 & 0.7 & 0.9 \\
\midrule
\textsf{OE-BCAE}( $\times 10^5$) & 8.62 & 8.63 & 8.62 & 8.62 & 8.62 \\
\bottomrule
\end{tabular}
\end{table}
}
\ifx \ISTR \undefined
\else

\begin{figure*}[!ht]
\subcaptionbox{Performance across the horizon $T$  \label{fig:e_T}}%
{\includegraphics[height=0.23\textwidth]{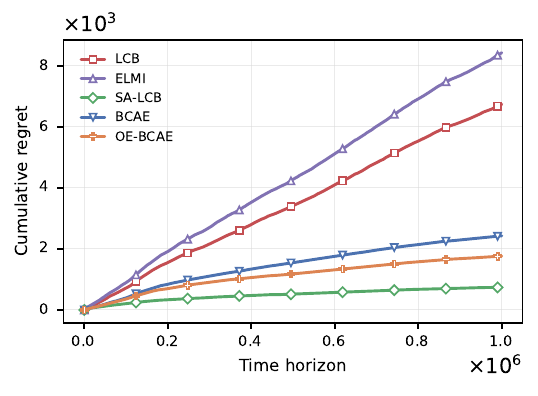}}
  \hspace{\fill}
  \subcaptionbox{Performance under different $K$\label{fig:e_K}}%
{\includegraphics[height=0.23\textwidth]{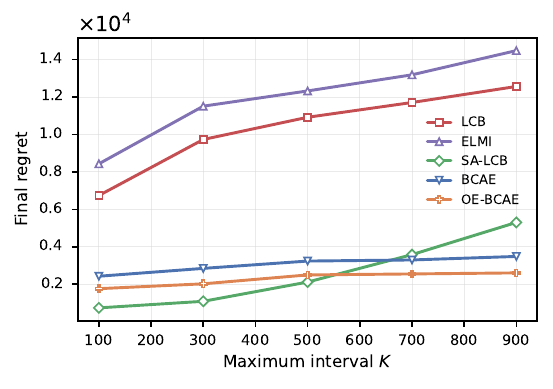}}
     \hspace{\fill} 
    \subcaptionbox{Performance under different $C$\label{fig:e_C}}%
{\includegraphics[height=0.23\textwidth]{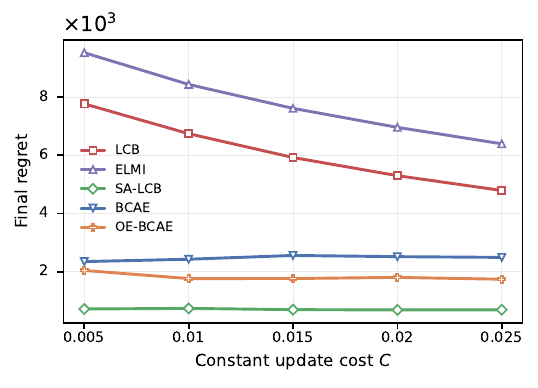}}
\caption{Empirical Evaluation on Real Frequency Trace.}
  \label{fig:empirical}
\vspace{-3mm}
\end{figure*}

\begin{figure}[!ht]
    \centering
    \includegraphics[width=0.75\columnwidth]{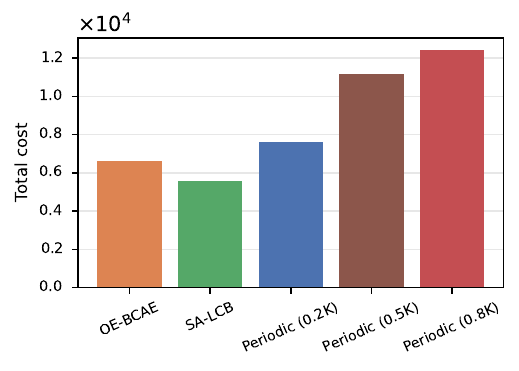}
    \caption{Total-cost comparison of \textsf{OE-BCAE}, \textsf{SA-LCB}, and periodic baselines on the real frequency trace.}
    \label{fig:frequency_compare_periodic}
    \vspace{-2mm}
\end{figure}

\section{Empirical Evaluation}\label{sec:empirical}
In this section, we evaluate the performance of our proposed online learning algorithms through an empirical study on a remote monitoring problem using real-world traces.

\subsection{Simulation Setup}
We consider a remote monitoring problem in which a monitor maintains a cached view of an evolving physical process. The remote source decides when to refresh that view under a constant update cost. As the underlying process, we use the public historical frequency dataset released by Fingrid for the Nordic synchronous power system,~\footnote{\url{https://data.fingrid.fi/en/datasets/339}.} which records frequency measurements from substations at multiple locations in Finland. We regularize the measurements onto a 1-second grid. Letting $y_t$ denote the regularized frequency value, we define the degradation cost of using a sample of age $\tau$ at time $t+\tau$ as $|y_{t+\tau}-y_t|$, which captures the monitoring error induced by outdated information. We then compare with the algorithms listed in Sec.~\ref{sec:num_setup} and heuristic periodic algorithms on chronological traces under a constant-cost update model. Our main results use horizon $T=10^6$, maximum update interval $K=100$, and update cost $C=0.01$, and we further report sensitivity to $K \in \{100,300,500,700,900\}$ and $C \in \{0.005,0.01,0.015,0.02,0.025\}$. We choose the update cost to be comparable in magnitude to the degradation cost, so that the objective reflects a genuine trade-off between communication overhead and information staleness across the range of update intervals. Regret is measured relative to the empirically optimal static update interval computed on the realized trace. We demonstrate the results in Fig.~\ref{fig:empirical} and Fig.~\ref{fig:frequency_compare_periodic}.

\subsection{Results and Discussion}


The regret results in Fig.~\ref{fig:empirical} on the real frequency trace are broadly consistent with the trends observed in the numerical evaluation in Sec.~\ref{sec:simulation}. First, our structure-aware methods \textsf{BCAE} and \textsf{OE-BCAE}, as well as \textsf{SA-LCB}, substantially outperform the unstructured or only partially structured baselines \textsf{LCB} and \textsf{ELMI}. This provides further empirical evidence that exploiting problem structure is critical for achieving low regret. Second, when compared with \textsf{SA-LCB}, we find that \textsf{OE-BCAE} becomes increasingly competitive as the search space grows, and eventually overtakes \textsf{SA-LCB} in the larger-$K$ regime considered here. Third, \textsf{OE-BCAE} consistently improves upon \textsf{BCAE}, indicating that the addition of LCB-style optimism helps the elimination-based strategy adapt more effectively in practice.

Fig.~\ref{fig:frequency_compare_periodic} compares the total cost of our adaptive methods with heuristic periodic baselines under the default setting $K=100$, $C=0.01$, and $T=10^6$. We set the update interval for the periodic baselines as $0.2, 0.5, 0.8$ of $K$ respectively. Both adaptive policies, \textsf{SA-LCB} and our \textsf{OE-BCAE}, outperform all three tested periodic rules by a clear margin. Specifically, our \textsf{OE-BCAE} achieves at least 13\% reduction in total cost compared to heuristic periodic baselines.   This indicates that when there is no prior knowledge about the degradation cost, setting a fixed update schedule heuristically may not be efficient. Instead, adaptive policies that exploit problem structures can efficiently adjust their decisions online based on observed costs and gain an empirical advantage in this setting. 

Finally, we note that our empirical evaluation using real-world traces demonstrates that, while our designs rely on the assumptions that the degradation costs are i.i.d. given age and independent across ages, they remain effective in practical scenarios where these assumptions may not hold perfectly. 


\fi

\section{Conclusion}

In this paper, we study the problem of jointly optimizing degradation costs and update expenses by strategically timing the updates, without prior information on the distribution of the degradation cost.  We have formulated this problem as a novel timing multi-armed bandit problem. We identify the challenges of the problem by providing the lower bound of an ablated timing bandit problem. We then explore the problem-specific structures of timing feedback and develop two efficient online algorithms with sublinear, near-optimal regret for it. Our online algorithms derive an update strategy that achieves the sublinear regret bounds relative to the optimal policy with complete knowledge of the degradation cost distribution.  As for future work, it would be interesting to generalize our approach under less restrictive assumptions. For example, we may incorporate techniques from rising bandits~\cite{Rising2022Metelli} to model scenarios where the degradation costs at a given age decrease in time due to technological advancements. More generally, we may adapt techniques from non-stationary bandits, e.g.,~\cite{besbes2014stochastic}, to handle environments without the i.i.d. assumption.
Another interesting direction is to apply our algorithm and ideas to broader scenarios, including the age-of-information optimization in communication networks, product age optimization in business, etc.

\begin{acks}
This work is supported in part by a General Research Fund from Research Grants Council, Hong Kong (Project No.~11200223), a Collaborative Research Fund from Research Grants Council, Hong Kong (Project No.~C1049-24G), and a Start-up Research Grant from The Chinese University of Hong Kong, Shenzhen (Project No.~UDF01004086). The authors would also like to thank the anonymous reviewers for their helpful comments.
\end{acks}

\bibliographystyle{ACM-Reference-Format}
\bibliography{ref}

\ifx \ISTR \undefined
\else
\appendix

\section{Proof of Proposition~\ref{thm:offline}}\label{App:proof_of_offline}

\begin{proof}
We first show that the problem can be framed as a Markov Decision Process (MDP). In such an MDP, it suffices to consider a stationary, deterministic policy that only depends on the current age. Second, we show that such a policy reduces to updating periodically and derive the optimal period. 

First, we model the problem as a Markov Decision Process (MDP) defined by the tuple \((\mathcal{S}, \mathcal{U}, P, c)\), 1) State Space, \(\mathcal{S} = \mathcal{A} \times \mathcal{G}\), where \(\mathcal{A} = \{1, 2, \dots, K\}\) is the finite set of possible ages, and \(\mathcal{G} \subseteq \mathbb{R}_{\ge 0}\) is the Borel space of possible degradation costs. A state at time \(t\) is \(s_t = (A_t, G_t)\); 2) Action Space: \(\mathcal{U} = \{0, 1\}\), representing "keep" (0) and "update" (1); 3)
Cost Function, the immediate cost is \(c((a,g), u) = g + C(a)u\), where \(C(a) \ge 0\) is the cost of updating at age \(a\); 4) Transition Kernel, Let \(P(\cdot \mid (a,g), u)\) denote the probability measure of the next state \((A_{t+1}, G_{t+1})\). The age transitions deterministically: \(A_{t+1} = 1\) if \(u=1\), and \(A_{t+1} = \min(a+1, K)\) if \(u=0\). The next degradation cost \(G_{t+1}\) is drawn from a probability measure \(F_{A_{t+1}}(\cdot)\) which depends only on the new age \(A_{t+1}\). Note that we include the degradation cost into the state to make the cost function deterministic over the state and the action, which matches the standard MDP in the literature. Later, we will see that, because the degradation cost depends only on the current age and is independent of future degradation costs, it is sufficient to consider optimal policies that only depend on the age but not the realized degradation cost.   

Follwoing~\cite{hernandez1996average}, we establish the Average Cost Optimality Equation (ACOE). Let \(J^*\) be the optimal long-run average cost, and \(h(a,g)\) be the relative value function~\cite{hernandez1996average}. The ACOE is given by:
\begin{align}
& J^* + h(a, g)\\
\begin{split}
  = & \min_{u \in \{0, 1\}} \bigg\{ c((a,g), u)+ \\
&\quad \quad \mathbb{E}[h(A_{t+1}, G_{t+1}) \mid A_t=a, G_t=g, U_t=u] \bigg\}
\end{split}\\
= &\min_{u \in \{0, 1\}} \left\{ g + C(a)u + \int_{\mathcal{G}} h(a_{u}', g') \, dF_{a_{u}'}(g') \right\},
\end{align}
where \(a_{0}' = \min(a+1, K)\) and \(a_{1}' = 1\). The last equation is obtained by substituting the explicit cost and transition dynamics, noting that $A_{t+1}$ only depends on $a$ and $u$, and $G_{t=1}$ only depends on $A_{t+1}$.

The optimal action is given by:
\[
\pi^*(a, g) \in \arg
\min_{u \in \{0, 1\}} \left\{ g + C(a)u + \int_{\mathcal{G}} h(a_{u}', g') \, dF_{a_{u}'}(g') \right\}
\]
Notice that the current degradation \(g\) appears as an additive constant independent of the action \(u\) and we let \(Q(a, u) = C(a)u + \int_{\mathcal{G}} h(a_{u}', g') \, dF_{a_{u}'}(g')\). We have that the optimal action is given by:
\[
\pi^*(a, g) \in \arg\min_{u \in \{0, 1\}} Q(a, u)
\]
Because \(Q(a, u)\) is strictly a function of \(a\) and \(u\), and is completely independent of \(g\), the minimizing action \(u^*\) depends only on \(a\). Therefore, \(\pi^*(a, g) = \pi^*(a)\). 

We can then focus on stationary, deterministic policy \(\pi^*(a)\) that maps the current age \(a_t \in \{1, 2, \dots, K\}\) to an action \(u_t \in \{0, 1\}\). Under such a policy, the system will always update exactly when the age reaches some \(k \in \{1, 2, \dots, K\}\), which can be modeled as a renewal reward process.  By the Elementary Renewal Reward Theorem~\cite{ross2014introduction}, the long-run expected average cost of a periodic policy with period \(k\), is almost surely equal to the expected cost of one cycle divided by the expected length of one cycle, i.e., $  \frac{\sum_{j=1}^{k} \tilde{g}(j) + C(k)}{k}$. Then, the optimal average cost is the minimum one over all possible choices of \(k\). And the optimal period \(k^*\) is the argument that achieves this minimum. This completes the proof. 

\end{proof}

\section{Comparison with the Adversarial Setting~\cite{tripathi2019whittle,tripathi2021onlinearXiv}}\label{app:compareTAoI}
The authors in~\cite{Tripathi2021AoI} introduce an epoch-based formulation, where a strategy fixes one update interval in an epoch and receives feedback on its cost, and consider the adversarial online setting where the degradation cost or the penalty on age is chosen adversarially. In more detail,  they show that the EXP3 algorithm achieves a regret of $O(\sqrt{T'M'\log M'})$, where $T'$ is the number of epochs and $M'$ is the length of the epochs (and also the maximum update interval). They consider that the total cost in an epoch is normalized to be within $[0,1]$. However, when $M'$ is large, such a normalization implies that the cost at each slot approaches zero, which may not be practical. Mapping to our case, the cost in an epoch will be in the order of $M'$. Taking it into account, their regret becomes $O(M'\sqrt{T'M'\log M'})$. Noting the total number of slots $T=T'\cdot M'$ and $K=M'$, applying their result to our case achieves a regret $\tilde{O}(K\sqrt{T})$. 

This is consistent with our finding in Proposition~\ref{thm:structure_lower_bound} that online algorithms not exploiting the consecutive feedback structure of the timing bandit could not achieve an expected regret smaller than $\Omega(K\sqrt{T}/\log K)$. As a comparison, our online algorithm that explores the feedback structure of the timing bandit paradigm achieves a regret of $\tilde{O}(\sqrt{T})$, as shown in Theorem~\ref{thm:ae_independent}



    
\section{Proof of Proposition~\ref{thm:structure_lower_bound}}\label{app:proof_of_structure_lower_bound}

\begin{proof}
    We now discuss the proof of Proposition~\ref{thm:structure_lower_bound}.  Our proof is adapted from ~\cite{auer2002nonstochastic} by further considering that in our case, pulling an arm $k$ will take $k$ slots and $k$ times of cost to receive one feedback.  
    {We consider the following $K+1$ problem instances and we will show that at least one of them will have a large regret given any online algorithm.
    }
    Let $\mu^{(i)}\in \mathbb{R}^K$ be the vector of the means of the arms in the $i$-th instance, $\forall i\in\{0,1,\cdots K\}$, and $\mu^{(i)}_j=1/2+1/2(1-\mathbbm{1}_{i=j})\Delta, \forall i,j\in [k]$, and $\mu^{(0)}_j=1/2+1/2\Delta,\forall j\in[K]$. 
    {Here, $\Delta$ is a parameter to be determined later.}

    We consider that the cost of arm $j$ under instance $i$ follows a Bernoulli distribution with mean $\mu^{(i)}_j$. {We let $\mathbb{E}_i[\cdot]=\mathbb{E}_{\mu^i}[\cdot]$}. We define $R_i$ as the expected regard of an online algorithm under instance $i$. We define $n_k(T)$ as the number of arm $k$ is pulled. 
    {We note that the optimal arm in instance $i$ is arm $i$, and the optimality gaps of other arms are $\Delta/2$.
    Therefore, the regret of the algorithm can be written as}
    \begin{equation}
        R_i = \sum_{k=1,k\neq i}^{K} k\cdot \mathbb{E}_i[n_k(T)]\Delta,
    \end{equation}

    Also, we have that (which is adapted from Lemma A.1 from~\cite{auer2002nonstochastic}, noting that $n_i[T]$ is a random variable bounded by $T/i$ and the KL divergence between Bernoulli($\mu^{(0)}_i$) and Bernoulli($\mu^{(i)}_i$) is bounded by $2\Delta^2$.)
    \begin{equation}
        \mathbb{E}_i[n_i(T)]]\leq \mathbb{E}_0[n_i(T)]+\frac{T}{i}\sqrt{\mathbb{E}_0[n_i(T)]\Delta^2},
    \end{equation}

Then, by the identity that $\sum_{i=1}^{K}i\cdot\mathbb{E}_{0}[n_i(T)]=T$ and Cauchy–Schwarz inequality, we have 
\begin{align}
   & \quad \sum_{i=1}^{K} i\cdot \mathbb{E}_i[n_i(T)] \\
   & \leq \sum_{i=1}^{K} i\cdot\left(\mathbb{E}_0[n_i(T)]+\frac{T}{i}\sqrt{\mathbb{E}_0[n_i(T)]\Delta^2}\right)\\
    &= T + \sum_{i=1}^{K} T\sqrt{\mathbb{E}_0[n_i(T)]\Delta^2}\\
    &= T+ T\Delta \sum_{i=1}^{K} \sqrt{\frac{1}{i}i\mathbb{E}_0[n_i(T)]}\\
    &\leq T+T\Delta\sqrt{\sum_{i=1}^{K} \left(\sqrt{\frac{1}{i}}\right)^2}\sqrt{\sum_{i=1}^{K}{\left(\sqrt{i\mathbb{E}_0[n_i(T)]}\right)^2}}\\
    & \leq  T+ T\Delta\sqrt{T}\sqrt{\log K}
\end{align}    

Then we have that
\begin{align}
    \sum_{i=1}^{K} R_i & =\Delta\sum_{i=1}^{K}\left(T-i\cdot\mathbb{E}_{i}[n_i(T)]\right) \\
    &\geq \Delta\cdot ( K\cdot T-T-T\Delta\sqrt{T}\sqrt{\log K})\\
    & \geq \frac{K^2}{16} \sqrt{\frac{T}{\log K}}
\end{align}
where the last equality is achieved by taken $\Delta=\frac{1}{4}\frac{K}{\sqrt{T\log K}}$ and noting that $K\geq 2$.  We conclude that exist an $i\in[K]$ such that $R_i\geq \frac{1}{16}K\sqrt{T/\log K}$.
\end{proof}

\section{Proof of Lemma~\ref{thm:more_information}}\label{app:thm:more_information}

\begin{proof}[Proof of Lemma~\ref{thm:more_information}]
    We first define some notations. Not that there are $m_k(t)$ feedback of the cost of arm $k$ at slot $t$.  For each arm $k$, we use $h_i(j)$ to denote the random variable of $g(j)$ (or the distribution $f_j$) at its $i$-th update. And we use  $\mu_{k,i}$ as the $i$-th sample of $\mu_{k}$; that is
    \begin{equation}
        \mu_{k,i}= \frac{1}{ k}\left(C(k)+\sum_{j=1}^{k} h_i(j)\right)
    \end{equation}     
    Then, we have that
    \begin{align}
        \bar{\mu}_k(t)&=\frac{1}{m_k(t)}\sum_{i=1}^{m_k(t)}{\mu_{k,i}} \\
        & = \frac{1}{k\cdot m_k(t)} \sum_{i=1}^{m_k(t)}\sum_{j=1}^{k} h_i(j) + \frac{ C(k)}{ k}.
    \end{align}
Noting that
\begin{align}
   \mu_k & = \frac{1}{k\cdot m_k(t)} \sum_{i=1}^{m_k(t)}\sum_{j=1}^{k} \tilde{g}(j)+ \frac{ C(k)}{ k}\\
   &= \frac{1}{k\cdot m_k(t)} \sum_{i=1}^{m_k(t)}\sum_{j=1}^{k} \mathbb{E}[h_i(j)]+ \frac{ C(k)}{ k}\\
   & = \mathbb{E}\left[ \frac{1}{k\cdot m_k(t)} \sum_{i=1}^{m_k(t)}\sum_{j=1}^{k} h_i(j) + \frac{ C(k)}{ k}\right].
\end{align}
We have that 
\begin{align}
   & \; \mathbf{P}\left(|\bar{\mu}_k(t)-\mu_k|\geq r_k(t)\right)\\
   = & \;  \mathbf{P}\Bigg(\frac{1}{k\cdot m_k(t)}\left|\sum_{i=1}^{m_k(t)}\sum_{j=1}^{k}h_i(j)-\sum_{i=1}^{m_k(t)}\sum_{j=1}^{k}\mathbb{E}\left[h_i(j)\right]\right|\\
  & \quad \quad \geq r_k(t)\Bigg)\\
   \leq & \; 2\exp(-2\cdot k\cdot m_k(t)\cdot r^2_k(t))
\end{align}

where the final step follows Hoeffding's inequality by noting that ${h_i(j)},\forall i\in[m_k(t)], j\in[k]$ are independent and bounded random variables in $[0,M]$. And we show the lemma by putting the value of $r_k(t)$ in the last step, which leads to 
\begin{equation}
    \mathbf{P}\left(|\bar{\mu}_k(t)-\mu_k|\geq r_k(t)\right) \leq 2\cdot K^{-1}\cdot T^{-2}
\end{equation}
\end{proof}

\section{Proof of Lemma~\ref{thm:comparable_c_i}}\label{app:thm:comparable_c_i}

\begin{proof}[Proof of Lemma~\ref{thm:comparable_c_i}]
    We prove it by showing that if arm $i$ and $j$ remain in $\mathcal{A}$ at the end of a super-round, we have
    \begin{equation}
        \frac{i\cdot \delta_{m_i}}{j\cdot \delta_{m_j}}\leq 2,
    \end{equation}
    where $\delta_{m_i}$ represents the increment in the number of samples for arm $i$ that are obtained during the super-round. We do not specify the super-round in the notation for ease of presentation.

In Algorithm~\ref{alg:ae_independent}, we have that, for each arm $i$ in $\mathcal{A}$ (which remains in $\mathcal{A}$ after the super-round), we will obtain one sample if we pull an arm with index larger than $i$.  In a super-round, we have that $\tilde{i}$ (the arm we pull) will be no smaller than $i$, when $i\leq 2^{\hat{j}-\tilde{j}-1}$, according to Line~\ref{line:arm_pulling_b}.  And for each $\tilde{j}$, we have $\hat{z}=2^{\tilde{j}}-2^{\tilde{j}-1}$ number of arm pulling. We have that,
\begin{equation}\label{eq:increment_BCAE}
    \delta_{m_i} = 1+ \sum_{\tilde{j}=1}^{\lfloor\hat{j}-\log_2{i}\rfloor}\left(2^{\tilde{j}}-2^{\tilde{j}-1}\right) = 2^{\lfloor\hat{j}-\log_2{i}\rfloor}=2^{\hat{j}-\lceil\log_2{i}\rceil}.
\end{equation}

And, we have that,
\begin{equation}\label{eq:ref_balanced}
    \frac{i\cdot\delta_{m_i}}{j\cdot\delta_{m_j}}=\frac{2^{\hat{j}-\lceil\log_2{i}\rceil+\log_2{i}}}{2^{\hat{j}-\lceil\log_2{j}\rceil+\log_2{j}}}\leq 2.
\end{equation}
And, we conclude the lemma.
\end{proof}

\section{Proof of Lemma~\ref{thm:bounded_sample}}\label{app:thm:bounded_sample}
\begin{proof}[Proof of Lemma~\ref{thm:bounded_sample}]
   For clarity, we denote $\mathcal{A}(t)$ as the set $\mathcal{A}$ at the beginning of slot $t$. Recall that $t_s$ represents the beginning slot of the $s$-th super-round. According to Lemma~\ref{thm:comparable_c_i}, we have that 
    \begin{equation}\label{eq:super-round}
        \frac{i\cdot m_i(t_s)}{j\cdot m_j(t_s)} \geq \frac{1}{2}, \forall i,j\in\mathcal{A}(t_s).
    \end{equation}

 If an arm, say $j$, is eliminated during or before the super-round, $m_j$ will not increase afterward. We can easily see that 
   \begin{equation}\label{eq:super-round-2}
        \frac{i\cdot m_i(t_s)}{j\cdot m_j(t_s)} \geq \frac{1}{2}, \forall i\in\mathcal{A}(t_s),\forall j\in[K].
    \end{equation}



We first show the following inequality for the number of samples of an arm $i\neq k^*$ at slot $t_s$.
\begin{equation}
     m_i(t_s) \leq \frac{36M^2\log (\sqrt{K}T)}{i\cdot \Delta_i^2} + K.
\end{equation}
Suppose not, we consider the first $t_{s'}$ such that,
\begin{equation}
   m_i(t_{s'}) > \frac{36M^2\log (\sqrt{K}T)}{i\cdot \Delta_i^2}.
\end{equation}

We have that

\begin{align}
     \Delta_i > & 6\cdot \sqrt{\frac{M^2\log (\sqrt{K}T)}{i\cdot m_i(t_{s'})}}\\
     \stackrel{(a)}{\geq} &  2\cdot\sqrt{\frac{M^2\log (\sqrt{K}T)}{k^*\cdot m_{k^*}(t_{s'})}}+ 2\cdot \sqrt{\frac{M^2\log (\sqrt{K}T)}{i\cdot m_i(t_{s'})}},\label{eq:gap_sample}
\end{align}
where (a) is by  \eqref{eq:super-round-2} noting that arm $k^*$ is always in $\mathcal{A}$ under the clean event $\mathcal{E}$. 

Then, we can show that
\begin{align}
    & \bar{\mu}_{i}-\sqrt{\frac{M^2\log (\sqrt{K}T)}{i\cdot m_i(t)}}\\
    \geq &\mu_i-2\cdot \sqrt{\frac{M^2\log (\sqrt{K}T)}{i\cdot m_i(t)}} \\
    = &\mu_{k^*}+\Delta_i-2\cdot \sqrt{\frac{M^2\log (\sqrt{K}T)}{i\cdot m_i(t)}}\\
    \geq & \bar{\mu}_{k^*}+\sqrt{\frac{M^2\log (\sqrt{K}T)}{k^*\cdot m_{k^*}(t)}}- 2\cdot\sqrt{\frac{M^2\log (\sqrt{K}T)}{k^*\cdot m_{k^*}(t)}}\\
    & +\Delta_i-2\cdot \sqrt{\frac{M^2\log (\sqrt{K}T)}{i\cdot m_i(t)}}\\ 
  \stackrel{\eqref{eq:gap_sample}}{>} & \bar{\mu}_{k^*}+ \sqrt{\frac{M^2\log (\sqrt{K}T)}{k^*\cdot m_{k^*}(t)}}.
\end{align}
We will have the elimination condition (Line~\ref{line:thresholding_b} in Algorithm~\ref{alg:ae_independent}) holds, and arm $i$ will be eliminated, and $m_i$ will remain unchanged afterward. Noting that we at most obtained $K$ samples of arm $i$ and due to the definition of $s'$ we choose, we have that,   
\begin{equation}
 m_i(t_{s'})\leq m_i(t_{s'-1})+K\leq \frac{36M^2\log (\sqrt{K}T)}{i\cdot \Delta_i^2} + K.
\end{equation}
And we have that $m_i(T)$ is bounded by $m_i(t_s)$ at the complete last super-round, plus the samples obtained from that to $T$ (which are bounded by $K$). We conclude the lemma. 
\end{proof}

\section{Proof of Lemma~\ref{thm:subarms-regret}}\label{app:thm:subarms-regret}

\begin{proof}[Proof of Lemma~\ref{thm:subarms-regret}]

We consider the subset of arms $\mathcal{K}_{j}$.  We denote $K'=\min\{K,2^{j}\}$, which is the maximum index of the arms in $\{\mathcal{K}_{j}$. Note that, according to Line~\ref{line:arm_pulling_b_b} in Algorithm~\ref{alg:ae_independent}, an arm $k$ in $\mathcal{K}_{j}$ will only be pulled when all arms with larger indexes have been eliminated.  If arm $k$ has never been pulled in the total horizon, we have $n_k(T)=0$. Otherwise, suppose the $t'$ is the slot arm $k$ is eliminated; otherwise, we set $t'=T$. We have that the total number of pulling arms $k$ in the whole period, $n_k(T)=n_k(t')$, as arm $k$ will not be pulled afterward. Also, all arms in $\mathcal{K}_{j}$ with index larger than $i$ have been eliminated, and thus, $n_i(T)=n_i(t'),\forall i\in\{i>k|i\in \mathcal{K}_{j}\}$. Also, we will not obtain sample of arm $k$ afterward; thus, $m_i(T)=m_i(t')= \sum_{i=k}^{K} n_i(t')$, as each pulling of arms with indexed larger than $k$ will provide a sample for arm $k$ if arm $k$ haven't been eliminated. We have that 

\begin{equation}
     \sum_{i=k}^{K'} n_i(T) \leq  \sum_{i=k}^{K'} n_i(T) + \sum_{i=K'}^{K} n_i(t') =m_i(T).
\end{equation}
We then apply Lemma~\ref{thm:bounded_sample} to the above inequality.

In summary, we have the following inequality.

\begin{equation}\label{eq:condition_number_of_pulls_app_b}
   n_k(T)=0\; \text{or}\; \sum_{i=k}^{K'} n_i(T) \leq \frac{36M^2\log (\sqrt{K}T) }{k\cdot \Delta_k^2} + 2\cdot K,\forall k\in \mathcal{K}_{j}.
\end{equation}

In addition, it is not hard to see that 
\begin{equation}\label{eq:condition_number_of_pulls_app_b_2}
    \sum_{i\in\mathcal{K}_j} i \cdot n_i(T)\leq T.
\end{equation}

To derive an upper bound to $\sum_{i\in\mathcal{K}_j} i\cdot n_i(T)\cdot \Delta_i$, the left-hand side of~\eqref{eq:subarms_regret}, we consider an optimization problem to maximize it by optimizing $n_i(T),\forall i \in \mathcal{K}_j$ while subject to constraint~\eqref{eq:condition_number_of_pulls_app_b} and constraint~\eqref{eq:condition_number_of_pulls_app_b_2}. 

We separately consider arms with large cost differences where condition~\eqref{eq:condition_number_of_pulls_app_b} tends to dominate and those with small cost differences where condition~\eqref{eq:condition_number_of_pulls_app_b_2} will likely dominate. With such a consideration, we separate arms in $\mathcal{K}_j$ into the following two sets: 
\begin{equation}\label{eq:set_define1_b}
    \mathcal{I}_1 := \left\{ i\in \mathcal{K}_j\left| \frac{1}{\Delta_i} \leq \sqrt{\frac{T}{36M^2\log (\sqrt{K}T)}} \right. \right\},
\end{equation}
and
\begin{equation}\label{eq:set_define2_b}
    \mathcal{I}_2 :=  \left\{ i\in \mathcal{K}_j \left| \Delta_i < \sqrt{\frac{36M^2\log (\sqrt{K}T)}{ T }} \right. \right\}.
\end{equation}

We consider two relaxed constraints to~\eqref{eq:condition_number_of_pulls_app_b} and condition~\eqref{eq:condition_number_of_pulls_app_b_2},
\begin{equation}\label{eq:condition_number_of_pulls_app_relaxed}
    n_k(T)=0\;\text{or} \; \sum_{i\geq k,i\in\mathcal
    {I}_1}^{K'} n_i(T) \leq \frac{36M^2\log (\sqrt{K}T) }{\Delta_k^2} + 2K,\forall k\in\mathcal{I}_1.
\end{equation}
and 
\begin{equation}\label{eq:condition_number_of_pulls_app_2_relaxed}
     \sum_{k\in\mathcal{I}_2} k \cdot n_k(T) \leq T.
\end{equation}

We can see that optimizing $\sum_{i\in\mathcal{K}_j} i\cdot n_i(T)\cdot \Delta_i$ under the relaxed constraints will provide a greater objective value, and we can separate optimizing the $n_{i}(T)$ in $ \mathcal{I}_1$ and $\mathcal{I}_2$ in the relaxed problem.

We first considers optimizing $n_{i}(T)$ in $ \mathcal{I}_1$ subject to~\eqref{eq:condition_number_of_pulls_app_relaxed} to maximizing 
\begin{equation}\label{eq:regret_1_I}
     \hat{R}_1(T):= \sum_{k\in\mathcal{I}_1} k\cdot n_{k}(T)\cdot \Delta_k.
\end{equation}
We denote such an optimization problem as $\mathcal{P}_1$. To find its optimal solution, we identify a special structure for the optimal solution, which is shown in Lemma~\ref{thm:optimal_conditions_b}. We consider one optimal solution to~$\mathcal{P}_1$ that satisfies the property stated in Lemma~\ref{thm:optimal_conditions_b}, i.e.,  $j\cdot \Delta_j$ and $j\cdot\Delta_j^2$ is a strictly increasing sequence in the index of the arms in $\mathcal{G} = \left\{ j\in\mathcal{I}_1|n_j(T) > 0 \right\}$.  We note that fixing $\mathcal{G}$, the problem becomes to maximizing
\begin{equation}\label{eq:regret_1_G}
     \hat{R}_1(T):= \sum_{k\in\mathcal{G}} k\cdot n_{k}(T)\cdot \Delta_k,
\end{equation}
subject to
\begin{equation}\label{eq:condition_number_of_pulls_app_relaxed_G}
    \sum_{i\geq k,i\in\mathcal
    G} n_i(T) \leq \frac{36M^2\log (\sqrt{K}T) }{k\cdot \Delta_k^2} + 2K,\forall k\in\mathcal{G}.
\end{equation}
with decision variables $n_j(T)>0,\forall j\in\mathcal{G}$.
Moreover, according to Lemma~\ref{thm:optimal_conditions_b}, we have that $i\cdot \Delta_{i}<k\cdot \Delta_{k}$ and $i\cdot \Delta^2_{i}<k\cdot \Delta^2_{k}$ if $i<k$. We list the elements in~$\mathcal{G}$ (note that $j\in \mathcal{G}$ if and only if $n_j(T)>0$) in increasing order, i.e., $k_1\leq k_2\cdots\leq k_{|\mathcal{G}|}$. Noting that fixing $\mathcal{G}$, $\mathcal{P}_1$ is an LP with increasing weight in $n_k(T)$ and tightening budget limits in $k$, and the optimal solution is to allocate as most the budget as possible to larger $k$. We can easily show that the optimal solution to the $\mathcal{P}_1$ is by setting, 
\begin{equation}
    n^*_{k_i}(T)= {36M^2\log (\sqrt{K}T)}\cdot\left(\frac{1}{k_i\cdot\Delta^2_{k_i}}-\frac{1}{k_{i+1}\cdot\Delta^2_{k_{i+1}}}\right),\forall i < |\mathcal{G}|,
\end{equation}
and 
\begin{equation}
      n^*_{k_{|\mathcal{G}|}}(T)= \frac{{36M^2\log (\sqrt{K}T)}}{\Delta^2_{k_{{|\mathcal{G}|}}}}+2K.
\end{equation}

We have the following argument,
\begin{align}
 & \quad \hat{R}_1(T) \\
 & \leq \sum_{i=1}^{|\mathcal{G}|} k_i\cdot n^*_{k_i}(T)\cdot \Delta_{k_i}\\
   & {\leq}  \sum_{i=1}^{|\mathcal{G}|-1} k_i\cdot  \Delta_{k_i} \cdot {36M^2\log (\sqrt{K}T)}\cdot\left(\frac{1}{k_i\cdot\Delta^2_{k_i}}-\frac{1}{k_{i+1}\cdot\Delta^2_{k_{i+1}}}\right)\nonumber\\
   & \quad +k_{|\mathcal{G}|}\cdot \Delta_{k_{|\mathcal{G}|}} \cdot \left({36M^2\log (\sqrt{K}T)}\cdot\frac{1}{k_{|\mathcal{G}|}\cdot\Delta^2_{k_{|\mathcal{G}|}}}+K\right)\\
   & = 36M^2\log (\sqrt{K}T) \cdot\Bigg( \frac{1}{\Delta_{k_1}} +\sum_{i=2}^{|\mathcal{G}|} \frac{k_i\cdot \Delta_{k_i}-k_{i-1}\cdot\Delta_{k_{i-1}}}{k_i\cdot\Delta^2_{k_i}}\Bigg)\nonumber\\ 
   &\quad +K\cdot k_{|\mathcal{G}|}\cdot\Delta_{k_{|\mathcal{G}|}}\\
   & \stackrel{\eqref{eq:set_define1_b}}{\leq} 36M^2\log (\sqrt{K}T) \cdot\Bigg(\sqrt{\frac{T}{36M^2\log (\sqrt{K}T)}}\nonumber\\
   &\quad +\sum_{i=2}^{|\mathcal{G}|} \sqrt{\frac{T}{36M^2\log (\sqrt{K}T)}}\frac{1}{k_{i}\Delta_{k_i}}(k_i\Delta_{k_i}-k_{i-1}\Delta_{k_i-1})\Bigg)\nonumber\\
   &\quad+2K\cdot k_{|\mathcal{G}|}\cdot\Delta_{k_{|\mathcal{G}|}}\\
   &\leq \sqrt{36 T M^2\log (\sqrt{K}T)} \cdot\left(1+\log\frac
   {k_{|\mathcal{G}|}\cdot\Delta_{k_{|\mathcal{G}|}}}{k_1\cdot\Delta_{k_1}}\right) \nonumber\\
   &\quad + 2K^2(\bar{C}+M)\\
    &\leq \sqrt{36 T M^2\log (\sqrt{K}T)}\left(1+\log\frac
   {2K(\bar{C}+M)\sqrt{T}}{\sqrt{36M^2\log (\sqrt{K}T)}}\right) \nonumber\\
   &\quad+2K^2(\bar{C}+M)
   \end{align}
where the last equality is by noting that $k_1\geq 1$, $k_{|\mathcal{G}|}\leq K$ and $\Delta_{k_{|\mathcal{G}|}}\leq \bar{C}+M$ (following the definition of $\mu_{k}$).

As for the arms in $\mathcal{I}_2$, we can have that
\begin{align}
{\hat{R}_2(T)} & := \sum_{k\in\mathcal{I}_2} k\cdot n_{k}(T)\cdot \Delta_{k} \\
& \leq \sum_{k\in\mathcal{I}_2} k\cdot n_{k}(T)\cdot \sqrt{\frac{36M^2\log (\sqrt{K}T)}{T}}\\
& \leq  \sqrt{36TM^2\log (\sqrt{K}T)}. 
\end{align}
where the first inequality is by that all arms in $\mathcal{I}_2$, \\$ \Delta_{k} \leq \sqrt{{36M^2\log (\sqrt{K}T)}/{T}}$, and the last inequality is due to $\sum_{k\in\mathcal{I}_2} k\cdot n_{k}(T)\leq T$.

As we discussed earlier in this proof, combining ${\hat{R}_1(T)}$ and ${\hat{R}_2(T)}$ will provide an upper bound to $\sum_{i\in\mathcal{K}_j} i\cdot n_i(T)\cdot \Delta_i$, and we conclude the lemma.
    
\end{proof}

\section{An optimal structure to $\mathcal{P}_1$}

\begin{lemma}\label{thm:optimal_conditions_b}
    There exists an optimal solution $\left\{ n_i(T) \right\}_{i\in \mathcal{K}_1}$ to $\mathcal{P}_{1}$ such that $i\cdot\Delta^2_i$ is a strictly increasing sequence in the index of the arms in $\mathcal{G} = \left\{ i|n_i(T) > 0 \right\}$. Further, $i\cdot\Delta_i$ is also a strictly increasing sequence in the index of the arms in $\mathcal{G}$.
\end{lemma}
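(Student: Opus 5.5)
The plan is an exchange argument on an optimal solution of $\mathcal{P}_1$ whose support is as small as possible. Write the budget of arm $k$ as
\[
B_k \triangleq \frac{36M^2\log(\sqrt{K}T)}{k\Delta_k^2}+2K,
\]
which is the form used in the proof of Lemma~\ref{thm:subarms-regret}. All the argument needs is that $B_k$ is a nonincreasing function of $k\Delta_k^2$, so it would also go through with the variant of \eqref{eq:condition_number_of_pulls_app_relaxed} that omits the factor $k$. The constraint is disjunctive: for each $k\in\mathcal{I}_1$, either $n_k=0$, or the tail $S_k\triangleq\sum_{l\ge k,\,l\in\mathcal{I}_1} n_l$ satisfies $S_k\le B_k$.

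First I establish that an optimum exists. For each fixed support pattern $\mathcal{G}$ the feasible set is a polytope, and it is bounded because every $n_k$ with $k\in\mathcal{G}$ satisfies $n_k\le S_k\le B_k$. The full feasible region is therefore a finite union of compact polytopes. The objective is linear, so the maximum is attained. Among all optimal solutions I pick one whose support $\mathcal{G}$ has minimum cardinality.

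The key observation is that a \emph{merge} of two consecutive support elements changes only a few tail sums. Let $i<k$ be consecutive in $\mathcal{G}$, and move all mass of one of them onto the other. Tails $S_l$ for $l\le i$ are unchanged. Indices strictly between $i$ and $k$ carry no mass, so their constraints are vacuous. The only constraint that can change is the one of the surviving arm. I then check two merges.

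\emph{Merge (a): move $n_k$ onto $i$.} The tail $S_i$ is unchanged, so arm $i$'s constraint still holds. Arm $k$ now has $n_k=0$, so its constraint becomes vacuous. The objective changes by $n_k(i\Delta_i-k\Delta_k)$. If $i\Delta_i\ge k\Delta_k$ this change is nonnegative, so we obtain an optimal solution with smaller support, contradicting minimality. Hence $i\Delta_i<k\Delta_k$ for all consecutive $i<k$ in $\mathcal{G}$, and by transitivity this gives the second claim of the lemma.

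\emph{Merge (b): move $n_i$ onto $k$.} The new tail at $k$ equals $n_i+S_k$, which is the old $S_i$, so it is at most $B_i$. If $i\Delta_i^2\ge k\Delta_k^2$, then $B_i\le B_k$ and arm $k$'s constraint holds. Arm $i$'s constraint is now vacuous, and the tails of support arms below $i$ are unchanged. The objective changes by $n_i(k\Delta_k-i\Delta_i)$, which is strictly positive by the previous step. This contradicts optimality, so $i\Delta_i^2<k\Delta_k^2$ for consecutive support elements, and transitivity gives strict monotonicity along all of $\mathcal{G}$.

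The only delicate point is the feasibility check in merge (b). It must use consecutive support elements, since otherwise intermediate support arms $l$ could have $B_l<B_i$ and the enlarged tail at $l$ could violate their constraints. This is why I argue pairwise on adjacent elements of $\mathcal{G}$ and conclude via transitivity, rather than comparing arbitrary pairs.
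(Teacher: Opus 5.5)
Your proof is correct and is essentially the paper's proof: an exchange argument on adjacent elements of $\mathcal{G}$ that uses the same two merges, shifting $n_k$ onto $i$ to force $i\Delta_i<k\Delta_k$ and then shifting $n_i$ onto $k$ to exclude $i\Delta_i^2\ge k\Delta_k^2$. Taking a minimum-support optimum instead of iteratively deleting inversion pairs is a tidy refinement that makes existence of an optimum explicit and yields the \emph{strict} inequality for $i\Delta_i$ directly; the paper's final step only rules out $j_1\Delta_{j_1}>j_2\Delta_{j_2}$, so it gives only the non-strict version. One small imprecision: without the factor $k$, the budget is not a function of $k\Delta_k^2$, but your side remark still holds, because $i<k$ and $i\Delta_i^2\ge k\Delta_k^2$ force $\Delta_i^2\ge\Delta_k^2$.
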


\begin{proof}[Proof of Lemma~\ref{thm:optimal_conditions_b}]

We prove it by showing that eliminating the diminishing sequence in 
$\left\{  \Delta_{j}\right\}_{j \in \mathcal{G}}$ will not decrease the objective value. We consider an optimal solution $\left\{ n_j(T) \right\}_{j\in \mathcal{K}_1}$. Suppose that $j\cdot\Delta_{j}^2$ is not a non-decreasing sequence in the index of the arms $\{j|j\in\mathcal{G}\}$. Then we can find two adjacent arms in $\mathcal{G}$, denoted as $j_1<j_2$, such that $j_1\Delta^2_{j_1}\geq j_2 \Delta^2_{j_2}$, which is called an \emph{inversion pair}. Also, according to condition~\eqref{eq:condition_number_of_pulls_app_b}, we have 

\begin{equation}\label{eq:condition_j_1_b}
   \sum_{i\geq j_1,i\in\mathcal{G}}^{K} n_i(T) \leq \frac{36M^2\log (\sqrt{K}T) }{j_1\cdot \Delta_{j_1}^2} + 2K,
\end{equation}
and
\begin{equation}\label{eq:condition_j_2_b}
   \sum_{i\geq j_2,i\in\mathcal{G}}^{K} n_i(T) \leq \frac{36M^2\log (\sqrt{K}T) }{j_2\cdot \Delta_{j_2}^2} + 2K.
\end{equation}
Clearly,~\eqref{eq:condition_j_1_b} implies~\eqref{eq:condition_j_2_b}.
Next, we will show that under such a case, we have
\begin{equation}
j_1\cdot \Delta_{j_1}\leq j_2\cdot \Delta_{j_2}, \label{eq:inversion_pair1_b}
\end{equation}
and we can, therefore, eliminate this inversion pair by constructing a new optimal solution.

We prove it by contradiction. Suppose that $j_1\cdot \Delta_{j_1}>j_2\cdot \Delta_{j_2}$. We construct an alternative solution where $\tilde{n}_{j_1}(T)={n}_{j_1}(T)+{n}_{j_2}(T)$, $\tilde{n}_{j_2}(T)=0$ and we keep all other $n_j(T)$. It is not hard to check that the constructed solution is also feasible to $\mathcal{P}_1$ with a strictly larger objective than the previous one. It contradicts that the original solution is optimal to $\mathcal{P}_1$. We therefore have~\eqref{eq:inversion_pair1_b}.

We consider an alternative solution that $\tilde{n}_{j_1}(T)=0$, $\tilde{n}_{j_2}(T)={n}_{j_1}(T)+{n}_{j_2}(T)$ and we keep all other $n_j(T)$. We can easily check that the constructed solution is also feasible to $\mathcal{P}_1$ with an objective no smaller than the previous one. Thus, the newly constructed solution is also optimal. Moreover, $j_1$ is no longer in the index set with non-zero $n_j(T)$. We eliminate at least one inversion pair.

By continuously applying the same way to the optimal solution until there does not exist $j_1<j_2$, such that $j_1\Delta^2_{j_1}\geq j_2\Delta^2_{j_2}$,  we obtain one optimal solution that achieves the desired property. 

Further, we show that $i\cdot\Delta_i$ is also a strictly increasing sequence in the index of the arms in $\mathcal{G}$. We prove it by contradiction. Suppose that $j_1\cdot \Delta_{j_1}>j_2\cdot \Delta_{j_2}$. We construct an alternative solution where $\tilde{n}_{j_1}(T)={n}_{j_1}(T)+{n}_{j_2}(T)$, $\tilde{n}_{j_2}(T)=0$ and we keep all other $n_j(T)$. It is not hard to check that the constructed solution is also feasible to $\mathcal{P}_1$ with a strictly larger objective than the previous one. It contradicts that the original solution is optimal to $\mathcal{P}_1$. 

\end{proof}

\section{Proof of Theorem~\ref{thm:ae_independent_oe}}\label{app:thm:ae_independent_oe}

\begin{proof}[Proof of Theorem~\ref{thm:ae_independent_oe}]
    We denote the  $n_i(t)$ as the number of times that arm $i$ is selected as $\tilde{i}$ at Step~\ref{line:arm_pulling_b_b} of Algorithm~\ref{alg:ae_independent} up to time $t$. We denote $n_{i,k}(t)$ as the number of times that arm $k$ is actually pulled following the LCB algorithm in the forward window of $i$, i.e., \eqref{eq:LCB_in}. Clearly, we have that
    \begin{equation}\label{eq:lcb_length}
        \sum_{k\in \mathcal{W}_i} n_{i,k}(t) = n_i(t), 
    \end{equation}
    and 
    \begin{equation}
       \sum_{i=1}^{K} \sum_{k\in \mathcal{W}_i} n_{i,k}(T)\cdot k = T.
    \end{equation}

We decompose the regret of the algorithm into two parts,
\begin{align}
 R_{OC}(T) & = \sum_{i=1}^{K} \sum_{k\in \mathcal{W}_i} n_{i,k}(T)\cdot k\cdot \Delta_k\\
 & = \underbrace{\sum_{i=1}^{K} \sum_{k\in \mathcal{W}_i} n_{i,k}(T)\cdot k\cdot \Delta_k - 4\cdot \sum_{i=1}^{K}  n_{i}(T)\cdot i\cdot \Delta_i}_{\text{Part I}: R_I} \nonumber\\ 
 & + \underbrace{4\cdot \sum_{i=1}^{K}  n_{i}(T)\cdot i\cdot \Delta_i}_{\text{Part II}: R_{II}}.\label{eq:regret_decoposed}
\end{align}

We first handle part I. For part I, we note that, following LCB, if $\Delta_k>\Delta_i$.
\begin{equation}\label{eq:LCB_num_bound}
    n_{i,k}(T)\leq \frac{4M^2\log (\sqrt{K}T)}{k\cdot(\Delta_k-\Delta_i)^2}. 
\end{equation}
We can show it by contradiction. Suppose no, we would have that at some $t$
\begin{equation}\label{eq:lcb_num}
    m_{k}(t)\geq n_{i,k}(T) > \frac{4M^2\log (\sqrt{K}T)}{k\cdot(\Delta_k-\Delta_i)^2},
\end{equation}
and we have that 
\begin{align}
   & \; \bar{\mu}_{i}-\sqrt{\frac{M^2\log (\sqrt{K}T)}{i\cdot m_i(t)}}  \\
   \leq & \; {\mu}_{i}\\
   = &\; {\mu}_{k} - \Delta_i + \Delta_k\\
   \leq &\; \bar{\mu}_{k} + \sqrt{\frac{M^2\log (\sqrt{K}T)}{k\cdot m_k(t)}}- \Delta_i + \Delta_k\\
   < & \;\bar{\mu}_{k} - \sqrt{\frac{M^2\log (\sqrt{K}T)}{k\cdot m_k(t)}}
\end{align}
where the last equality follows~\eqref{eq:lcb_num}, and thus arm $k$ will not be chosen following LCB defined in~\eqref{eq:LCB_in}.

For ease of presentation, we denote
\begin{equation}
    R_{I,i}\triangleq \sum_{k\in \mathcal{W}_i} n_{i,k}(T)\cdot k\cdot \Delta_k - 4\cdot  n_{i}(T)\cdot i\cdot \Delta_i
\end{equation}We have that 
\begin{align}
    R_I = \sum_{i=1}^{K} R_{I,i}.
\end{align}

We have that 
\begin{align}
    R_{I,i}\stackrel{(a)}{\leq} & 4\cdot i\cdot \left(\sum_{k\in \mathcal{W}_i} n_{i,k}(T)\cdot \Delta_k - n_{i}(T)\cdot \Delta_i\right)\\
    \stackrel{\eqref{eq:lcb_length}}{=} & 4\cdot i\cdot \sum_{k\in \mathcal{W}_i} n_{i,k}(T)\cdot \left(\Delta_k - \Delta_i\right)\\
    \leq & 4\cdot i\cdot \sum_{k\in \mathcal{W}_i,\Delta_k> \Delta_i} n_{i,k}(T)\cdot \left(\Delta_k - \Delta_i\right)
\end{align}
where (a) is by noting that the arm indexes in the forward window $\mathcal{W}_i$ are bounded by $4\cdot i$.

We separately consider arms in $\mathcal{W}_i$ where $\Delta_k> \Delta_i$, depending on whether $\Delta_k -  \Delta_i\geq \sqrt{\frac{ w\cdot 4M^2\log (\sqrt{K}T) }{ i\cdot n_i(T)}}$. For the "smaller" ones, we have that 
\begin{align}
  &\;  \sum_{k:\Delta_k-\Delta_i\leq \sqrt{\frac{ w\cdot 4M^2\log (\sqrt{K}T) }{i\cdot n_i(T)}}} n_{i,k}(T)\cdot \left(\Delta_k - \Delta_i\right)\\
\leq &\; \sum_{k: \Delta_k-\Delta_i\leq \sqrt{ \frac{w\dot 4M^2\log (\sqrt{K}T) }{i\cdot n_i(T)}}} n_{i,k}(T) \cdot \sqrt{ \frac{w\cdot 4M^2\log (\sqrt{K}T) }{i\cdot n_i(T)}}\\
\leq &\; n_i(T)\cdot \sqrt{\frac{ w\cdot 4M^2\log (\sqrt{K}T) }{i\cdot n_i(T)}}\\
= & \; \sqrt{{n_i(T)}\cdot{w\cdot  4M^2\log (\sqrt{K}T) }/i}.
\end{align}
For the "larger" ones, we have that 
     
\begin{align}
  &\;  \sum_{k:\Delta_k-\Delta_i\geq \sqrt{\frac{w\cdot 4M^2\log (\sqrt{K}T) }{i\cdot n_i(T)}}} n_{i,k}(T)\cdot \left(\Delta_k - \Delta_i\right)\\
\stackrel{\eqref{eq:LCB_num_bound}}{\leq} &\; \sum_{k: \Delta_k-\Delta_i\geq \sqrt{\frac{w\cdot 4M^2\log (\sqrt{K}T) }{i\cdot n_i(T)}}} \frac{w\cdot 4M^2\log (\sqrt{K}T)}{k\cdot(\Delta_k-\Delta_i)} \\
\leq &\; w\cdot  \sqrt{ n_i(T)\cdot\frac{4M^2\log (\sqrt{K}T) }{w\cdot i}}\\
= & \; \sqrt{{n_i(T)}\cdot w \cdot{ 4M^2\log (\sqrt{K}T) }/i}.
\end{align}

Then, for part I, we have that 
\begin{align}
     R_I & = \sum_{i=1}^{K} R_{I,i}\\
     & \leq \sum_{i=1}^{K} \sqrt{{n_i(T)}\cdot w \cdot{ 4M^2\log (\sqrt{K}T) }/i}\\
     & \stackrel{(a)}{\leq} \sqrt{T\cdot w \cdot{ 4M^2\log (\sqrt{K}T) }}\cdot \sqrt{\sum_{i=1}^{K} 1/i^2}\\
     & \leq 4\cdot \sqrt{T\cdot w \cdot{ M^2\log (\sqrt{K}T) }}\label{eq:regret_I}
\end{align}
, where (a) is by noting that $\sum_{i} n_i(T)\cdot i\leq T$ and applying the Cauchy–Schwarz inequality,  and the last equation is by noting that $\sum_{i=1}^{K} 1/i^2\leq 2$. 


We then move on to part II.
\begin{equation}
    R_{II} = 4\cdot \sum_{i=1}^{K}  n_{i}(T)\cdot i\cdot \Delta_i.
\end{equation}
Recall that $n_i(t)$ is the number of times that arm $i$ is selected as $\tilde{i}$ at Step~\ref{line:arm_pulling_b_b} of Algorithm~\ref{alg:ae_independent} up to time $t$. We actually pull an arm following the LCB algorithm in the forward window of $i$, i.e., \eqref{eq:LCB_in}. 

We note that $R_{II}$ is similar to the regret that we have analyzed for the \textsf{BCAE} algorithm. But there is a major difference: at each time to select $\tilde{i}$ at Step~\ref{line:arm_pulling_b_b}, we are based on the past pulling decisions that further involve the LCB component~\eqref{eq:LCB_in}, instead of those selected at previous rounds of Step~\ref{line:arm_pulling_b_b}. Nevertheless, we can show that key properties for demonstrating the regret for \textsf{BCAE} still hold in the analysis of $R_{II}$ (with only differences in some constant factors).

The first property is the balanced confidence bound among all arms, i.e., analogous to Lemma~\ref{thm:comparable_c_i}. Specifically, we can show that 
\begin{equation}\label{eq:super-round}
        \frac{i\cdot m_i(t_s)}{j\cdot m_j(t_s)} \leq 4, \forall i,j\in\mathcal{A}(t_s).
    \end{equation}
The constant increases for two to four, compared to Lemma~\ref{thm:comparable_c_i}. Nevertheless, this will only incur a larger constant factor in all our later analyses. As we showed in Appendix~\ref{app:thm:comparable_c_i}, the critical step is to show that at the end of a super-round, we have
    \begin{equation}\label{eq:per-round}
        \frac{i\cdot \delta_{m_i}}{j\cdot \delta_{m_j}}\leq 4,
    \end{equation}
where $\delta_{m_i}$ represents the increment in the number of samples for arm $i$ that are obtained during the super-round. As the arm we actually pull is not smaller (in index) than that in the \textsf{BCAE}, we have that $\delta_{m_i}$ is at least that in the \textsf{BCAE} (in ~\eqref{eq:increment_BCAE}), i.e., 
\begin{equation}\label{eq:tobe}
    \delta_{m_i}\geq 2^{\hat{j}-\lceil\log_2{i}\rceil}.
\end{equation}
In addition, we get more samples of arm $i$ if we actually pull the larger arm. This only happens when $\tilde{i}$ at Step~\ref{line:arm_pulling_b_b} of Algorithm~\ref{alg:ae_independent} is within $[2^{\lceil\log_2 i\rceil-2}+1,2^{\lceil\log_2 i\rceil-1}]$ following the definition of the forward window in~\eqref{eq:defn:window} and $2^{\hat{j}-\tilde{j}}=2^{\lceil\log_2 i\rceil-1}]$.\footnote{According to Step~\ref{line:arm_pulling_b_b} of Algorithm~\ref{alg:ae_independent}, if $\tilde{j}$ is larger, such an $\tilde{i}$ is too larger to be chosen. If $\tilde{j}$ is smaller, $\tilde{i}$ is chosen only when arm $i$ has been eliminated.} The number of such pulling is at most $2^{\tilde{j}}-2^{\tilde{j}-1}$ when $\tilde{j}=\hat{j}-{\lceil\log_2 i\rceil}+1$. We have that 
\begin{equation}
    \delta_{m_i}\leq 2^{\hat{j}-\lceil\log_2{i}\rceil}+2^{\hat{j}-\lceil\log_2{i}\rceil}.
\end{equation}
With the lower bound and upper bound above, we can easily conclude~\eqref{eq:per-round} (similar to ~\eqref{eq:ref_balanced} and~\eqref{eq:super-round}. Further, with~\eqref{eq:super-round}, we can prove the version of Lemma~\ref{thm:bounded_sample} for the analysis of $R_{II}$ (with a larger constant factor).

The second property is the specific bound on $n_i(T)$, analogous to~\eqref{eq:condition_number_of_pulls_app_b_m}, due to the feedback structure. We note that  $n_i(t)$ is the number of times that arm $i$ is selected as $\tilde{i}$ at Step~\ref{line:arm_pulling_b_b} of Algorithm~\ref{alg:ae_independent} up to time $t$. \footnote{We note that while in \textsf{OE-BCAE}, arm $i$ may also be pulled when chosen by the introduced LCB, these are not counted in $n_i(t)$. } Thus, $n_i(t)$ is similar to that under \textsf{BCAE}, we will never choose $i$ at Step~\ref{line:arm_pulling_b_b} unless all the arms in the corresponding subset with larger indexes are eliminated, which results in~\eqref{eq:condition_number_of_pulls_app_b_m} (with a larger constant factor under the analysis of $R_{II}$).

With the above two key properties, we can follow the proof of Lemma~\ref{thm:subarms-regret} and Theorem~\ref{thm:ae_independent} to show that  $R_{II}$ is in $\tilde{O}(\sqrt{T})$. We omit the details for simplicity. Together with~\eqref{eq:regret_I} and~\eqref{eq:regret_decoposed}, we conclude the theorem.


\end{proof} 
\fi

\end{document}